\documentclass{article}

\usepackage{microtype}
\usepackage{graphicx}
\usepackage{subcaption}
\usepackage{booktabs} % for professional tables
\usepackage{amsmath,amsfonts,bm}

\def\eqref#1{equation~\ref{#1}}
\def\Eqref#1{Eq.~(\ref{#1})}
\def\1{\bm{1}}

\def\vg{{\bm{g}}}

\def\vv{{\bm{v}}}

\def\vx{{\bm{x}}}
\def\vy{{\bm{y}}}

\def\mA{{\bm{A}}}
\def\mB{{\bm{B}}}

\def\mE{{\bm{E}}}

\def\mG{{\bm{G}}}
\def\mH{{\bm{H}}}
\def\mI{{\bm{I}}}

\def\mK{{\bm{K}}}

\def\mP{{\bm{P}}}

\def\mS{{\bm{S}}}

\def\mU{{\bm{U}}}
\def\mV{{\bm{V}}}
\def\mW{{\bm{W}}}
\def\mX{{\bm{X}}}
\def\mY{{\bm{Y}}}
\def\mZ{{\bm{Z}}}

\def\mSigma{{\bm{\Sigma}}}

\DeclareMathAlphabet{\mathsfit}{\encodingdefault}{\sfdefault}{m}{sl}
\SetMathAlphabet{\mathsfit}{bold}{\encodingdefault}{\sfdefault}{bx}{n}

\def\sR{{\mathbb{R}}}

\usepackage{hyperref}
\usepackage{tcolorbox}
\usepackage{multirow}
\usepackage[accepted]{icml2026}

\usepackage{amsmath}
\usepackage{amssymb}
\usepackage{mathtools}
\usepackage{amsthm}
\usepackage[table]{xcolor}

\usepackage[capitalize,noabbrev]{cleveref}

\theoremstyle{plain}
\newtheorem{theorem}{Theorem}[section]

\newtheorem{lemma}[theorem]{Lemma}

\theoremstyle{definition}
\newtheorem{definition}[theorem]{Definition}

\theoremstyle{remark}
\newtheorem{remark}[theorem]{Remark}

\usepackage[textsize=tiny]{todonotes}

\icmltitlerunning{Diving into Kronecker Adapters: Component Design Matters}

\begin{document}

\twocolumn[
  \icmltitle{Diving into Kronecker Adapters: Component Design Matters}

  % It is OKAY to include author information, even for blind submissions: the
  % style file will automatically remove it for you unless you've provided
  % the [accepted] option to the icml2026 package.

  % List of affiliations: The first argument should be a (short) identifier you
  % will use later to specify author affiliations Academic affiliations
  % should list Department, University, City, Region, Country Industry
  % affiliations should list Company, City, Region, Country

  % You can specify symbols, otherwise they are numbered in order. Ideally, you
  % should not use this facility. Affiliations will be numbered in order of
  % appearance and this is the preferred way.
  \icmlsetsymbol{equal}{*}

  \begin{icmlauthorlist}
    \icmlauthor{Jiayu Bai}{yyy}
    \icmlauthor{Danchen Yu}{yyy}
    \icmlauthor{Zhenyu Liao}{yyy}
    \icmlauthor{TianQi Hou}{comp}
    \icmlauthor{Feng Zhou}{sch}
    \icmlauthor{Robert C. Qiu}{yyy}
    \icmlauthor{Zenan Ling}{yyy}
    %\icmlauthor{}{sch}
    % \icmlauthor{Firstname8 Lastname8}{sch}
    % \icmlauthor{Firstname8 Lastname8}{yyy,comp}
    %\icmlauthor{}{sch}
    %\icmlauthor{}{sch}
  \end{icmlauthorlist}

  \icmlaffiliation{yyy}{School of Electronic Information and Communications, Huazhong University of Science and Technology}
  \icmlaffiliation{comp}{Huawei}
  \icmlaffiliation{sch}{Center for Applied Statistics and School of Statistics, Renmin University of China}

  \icmlcorrespondingauthor{Zenan Ling}{lingzenan@hust.edu.cn}
  % \icmlcorrespondingauthor{Firstname2 Lastname2}{first2.last2@www.uk}

  % You may provide any keywords that you find helpful for describing your
  % paper; these are used to populate the "keywords" metadata in the PDF but
  % will not be shown in the document
  \icmlkeywords{Machine Learning, ICML}

  \vskip 0.3in
]
% this must go after the closing bracket ] following \twocolumn[ ...
% This command actually creates the footnote in the first column listing the
% affiliations and the copyright notice. The command takes one argument, which
% is text to display at the start of the footnote. The \icmlEqualContribution
% command is standard text for equal contribution. Remove it (just {}) if you
% do not need this facility.
% Use ONE of the following lines. DO NOT remove the command.
% If you have no special notice, KEEP empty braces:
\printAffiliationsAndNotice{}  % no special notice (required even if empty)
% Or, if applicable, use the standard equal contribution text:
% \printAffiliationsAndNotice{\icmlEqualContribution}

\begin{abstract}
Kronecker adapters have emerged as a promising approach for fine-tuning large-scale models, enabling high-rank updates through tunable component structures. However, existing work largely treats the component structure as a fixed or heuristic design choice, leaving the dimensions and number of Kronecker components underexplored. In this paper, we identify component structure as a key factor governing the capacity of Kronecker adapters. We perform a fine-grained analysis of both the dimensions and number of Kronecker components.
In particular, we show that the alignment between Kronecker adapters and full fine-tuning depends on  component configurations. Guided by these insights, we propose Component Designed Kronecker Adapters (CDKA). We further provide parameter-budget–aware configuration guidelines and a tailored training stabilization strategy for practical deployment. Experiments across various architectures and modalities demonstrate the effectiveness of CDKA. Code is available at \url{https://github.com/rainstonee/CDKA}.
\end{abstract}

\section{Introduction}
Parameter-Efficient Fine-Tuning (PEFT) methods~\citep{houlsby2019parameter,li2021prefix,liu2022few,fu2023effectiveness,he2023sensitivity,hu2022lora} have achieved state-of-the-art performance in adapting large-scale pretrained models~\citep{devlin2019bert,raffel2020exploring,brown2020language,achiam2023gpt,touvron2023llama,rombach2022high,kirillov2023segment} to downstream tasks. As the most widely adopted approach in PEFT, adapter-based methods~\citep{houlsby2019parameter,pfeiffer2021adapterfusion,he2022towards,hu2022lora,meng2024pissa,zhang2025lora} incorporate existing network layers with lightweight adapters containing only a small number of trainable parameters. Despite their efficiency, adapter-based methods typically exhibit a noticeable performance gap compared to full fine-tuning on complex tasks~\citep{hu2022lora,ding2023parameter,liu2024dora,biderman2024lora,wang2025lorapro,zhang2025lora}. This gap largely arises because adapters are constrained to limited expressive spaces, such as low-rank subspaces in LoRA~\citep{hu2022lora}. 

To mitigate this limitation, a growing body of work~\citep{hyeon-woo2022fedpara,edalati2022krona,ren2024melora,li2025bora,huang2025hira} has explored alternative adapter formulations beyond the simple matrix product used in LoRA, enabling more expressive and flexible representations. One notable direction among these extensions is the incorporation of the Kronecker product~\citep{edalati2022krona,braga2024adakron,yeh2024navigating,yu2025moka,sadeghi2025moka}, which enables high-rank weight updates with minimal parameter budget. In this work, we consider a general formulation of Kronecker adapters, where the update on the pre-trained weight $\mW_0$ is expressed as a sum of $r$ Kronecker components, namely:
\[
    \mW = \mW_0 + \sum_{i=1}^r \mB^{(i)} \otimes \mA^{(i)},
\] 
where $\mA^{(i)} \in \sR^{r_1 \times \frac{d_{\text{in}}}{r_2}}$ and $\mB^{(i)} \in \sR^{\frac{d_{\text{out}}}{r_1} \times r_2}$. Previous studies~\citep{edalati2022krona,sadeghi2025moka} have shown that the dimensions of the Kronecker components $\mA^{(i)}$ and $\mB^{(i)}$, which are governed by hyperparameters $r_1$ and $r_2$, together with the number of components $r$, play a crucial role in determining the expressive capacity of Kronecker adapters. We refer to the choice of $r_1$, $r_2$ and $r$ as \emph{component design} for Kronecker adapters. Despite recent progress, substantial gaps remain in understanding how component design determines both the theoretical properties and empirical performance of Kronecker adapters. In practice, most existing approaches~\citep{yu2025moka,sadeghi2025moka} adopt component configurations that enable Kronecker adapters to approximate full-rank updates. However, their empirical performance still falls significantly short of full fine-tuning and is even inferior to LoRA, which is explicitly constrained to a low-rank subspace.

Our fundamental objective is to fully unlock the potential of Kronecker adapters through principled component design. Specifically, we seek to address:
% However, substantial gaps remain in both the theoretical understanding and practical deployment of Kronecker adapters, which has largely hindered broader adoption and recognition for Kronecker adapters. Specifically, we seek to address the following questions:

% However, substantial gaps remain in both the theoretical understanding and practical deployment of Kronecker adapters, which has largely hindered their broader adoption and recognition. Some studies~\citep{shen2025kronecker,yu2025moka,braga2024adakron,yeh2024navigating,sadeghi2025moka} have proposed empirical improvements to Kronecker adapters, but their practical performance still falls short of state-of-the-art adapter-based methods. Specifically, we seek to address the following questions:

% One notable direction in these extensions is the incorporation of the Kronecker product~\citep{edalati2022krona,shen2025kronecker,yu2025moka,braga2024adakron,yeh2024navigating,sadeghi2025moka}, which can achieve a high update rank with the same parameter budget compared to vanilla LoRA. However, substantial gaps remain in both the theoretical understanding and practical deployment of Kronecker adapters, which has largely hindered their broader adoption and recognition. Specifically, we seek to address the following questions:
% \begin{tcolorbox}[
%   colback=gray!10,
%   colframe=gray!50,
%   boxrule=0.5pt,
%   arc=2pt,
%   left=6pt,
%   right=6pt,
%   top=6pt,
%   bottom=6pt
% ]
\begin{itemize}
    \item \textit{whether component design is the key factor for Kronecker adapters} and
    \item \textit{whether a principle exists for component design.}
\end{itemize}
% \end{tcolorbox}

% Nevertheless, Kronecker LoRA still faces several unsolved challenges. First, the design space of the update structure, i.e., the shapes of the two low-rank matrices, remains unclear. This is crucial for Kronecker LoRA, as it directly determines both the parameter count and the effective rank of the update weight. Second, Kronecker product imposes a specific structure on the update weight. Such structural constraints may reduce expressive capacity and often lead to noticeable performance degradation. Finally, the broader design space of Kronecker LoRA, including the selection of scaling factors and initialization schemes, is still insufficiently explored, leaving uncertainty about how to fully realize its potential. These challenges continue to restrain the effectiveness of Kronecker LoRA.

In this paper, we provide a positive answer to these questions. We begin by highlighting the central role of component design in Kronecker adapters. We emphasize that the performance of Kronecker adapters does not consistently improve as the attainable rank increases. Instead, it exhibits distinct trends as $r_1$, $r_2$, and $r$ vary. To understand how individual component configurations influence the behavior of Kronecker adapters, we conduct a theoretical analysis grounded in the Kronecker singular value decomposition. We show that the subspace alignment between Kronecker adapters and full fine-tuning is fully determined by the choice of $r_1$, $r_2$, and $r$. Guided by these theoretical insights, we derive principles for component design and empirically validate their effectiveness. We refer to our approach as Component Designed Kronecker Adapters (CDKA). To facilitate the practical deployment of CDKA, we provide guidelines for selecting $r_1$, $r_2$, and $r$ under a fixed parameter budget. Furthermore, we propose a stabilization strategy tailored to CDKA, which further enhances its performance.

To validate the effectiveness of CDKA, we conduct experiments across various Natural Language Processing (NLP) and Computer Vision (CV) tasks, including Natural Language Understanding (NLU), mathematical reasoning, code generation and image classification. Notably, CDKA achieves state-of-the-art performance on mathematical reasoning and image classification, while attaining the second best performance on code generation. On NLU tasks, CDKA attains near-optimal results using only $12.5\%$ of the trainable parameters. More importantly, CDKA substantially improves the performance of Kronecker adapters, making them competitive with the strongest PEFT approaches.

% and its higher expressive capacity to improve expressive capacity without increasing parameter budget. 

% \paragraph{Contributions.}
Our contributions are summarized as follows.
\begin{itemize} 
\item[$\bullet$] We emphasize that the performance of Kronecker adapters depends critically on the choice of component dimensions, which are determined by $r_1$ and $r_2$, and the number of components $r$. Through a theoretical analysis based on Kronecker singular value decomposition, we show that the subspace alignment with full fine-tuning is governed by these choices. Based on this analysis, we derive principles for component design and empirically validate their effectiveness.
\item[$\bullet$] Guided by these theoretical insights, we propose Component Designed Kronecker Adapters (CDKA). We provide guidelines for component design under a fixed parameter budget. We further introduce a training stabilization strategy tailored to CDKA, leading to consistent and improved empirical performance.
\item[$\bullet$] We validate CDKA across a range of NLP and CV tasks, showing that it achieves state-of-the-art performance on mathematical reasoning and image classification, the second best results on code generation, and near-optimal performance on NLU using only $12.5\%$ of the trainable parameters. More importantly, CDKA substantially improves the performance of Kronecker adapters, rendering them competitive with state-of-the-art PEFT methods.
\end{itemize}

\subsection{Related Works}
\paragraph{Adapter-based methods.} As one of the most widely used and effective adapter-based methods, Low-Rank Adaptation (LoRA)~\citep{hu2022lora} assumes that changes in the weights of pretrained models exhibit a low-rank structure. Accordingly, LoRA approximates weight updates by decomposing them into the product of two low-rank matrices. Numerous variants have been proposed to further improve the performance of LoRA. AdaLoRA~\citep{zhang2023adaptive} adaptively allocates ranks, assigning higher capacity to more important components. rsLoRA~\citep{kalajdzievski2023rank} introduces a carefully designed scaling factor to ensure training stability. LoRA-GA~\citep{wang2024lora} approximates the gradients of full fine-tuning through initialization. LoRA-One~\citep{zhang2025lora} proposes an improved initialization strategy inspired by theoretical analysis on the subspace alignment with full fine-tuning. 

Beyond the standard matrix product formulation of vanilla LoRA, several approaches adopt more expressive adaptation mechanisms. DoRA~\citep{liu2024dora} decomposes pretrained weights into magnitude and direction and applies LoRA to directional updates to enhance representational capacity. MELoRA~\citep{ren2024melora} trains a collection of lightweight LoRA modules, each with a small parameter budget. FouRA~\citep{borse2024foura} applies LoRA in the frequency domain, while HiRA~\citep{huang2025hira} connects updated and pretrained weights via Hadamard product.

\paragraph{Kronecker adapters.}
KronA~\citep{edalati2022krona} generalizes LoRA by replacing the standard matrix product with a single Kronecker product, thereby significantly reducing the number of parameters while enabling higher effective rank. LoKr~\citep{yeh2024navigating} further extends this idea by incorporating an additional low-rank decomposition on the Kronecker component to improve expressive capacity. MoKA-MoE\footnote{We refer to the MoKA in~\citet{yu2025moka} as MoKA-MoE to distinguish it from the MoKA in~\citet{sadeghi2025moka}.}~\citep{yu2025moka} models the Kronecker component using Mixture-of-Experts, whereas MoKA~\citep{sadeghi2025moka} represents the weight update as a mixture of Kronecker products with different component dimensions. Both approaches enhance the expressive capacity of the original KronA formulation.
% \begin{table*}[tb]
% \caption{UniKA: a unified framework.}
% \label{tab:framework}
% \begin{center}
% \resizebox{0.88\textwidth}{!}{
%     \begin{tabular}{@{}ccccc@{}}
%     \toprule
%     \textbf{Method}  & \textbf{Constrained on $r_1$ and $r_2$} & \textbf{Constrained on $r$} & \textbf{Maximum possible rank} & \textbf{Parameter Budget} \\  \midrule
%     Full fine-tuning & $r_1 = 1$ and $r_2 = d_{\text{in}}$ or $r_1 = d_{\text{out}}$ and $r_2 = 1$ & $\sZ^{+}$ & Full rank & $rd_{\text{in}}d_{\text{out}}$\\
%     Full fine-tuning & $r_1 = 1$ and $r_2 = 1$ & $\min({d_{\text{in}},d_{\text{out}}})$ & Full rank & $\min({d_{\text{in}},d_{\text{out}}})(d_{\text{in}}+d_{\text{out}})$\\
%     LoRA~\citep{hu2022lora} & $r_1 = 1$ and $r_2 = 1$ & $r \in \sZ^{+}$ & $r$ & $r(d_{\text{in}}+d_{\text{out}})$\\
%     KronA~\citep{edalati2025krona} & $r_1 = r_2 = k$ and $d_{\text{in}} \operatorname{mod} k = d_{\text{in}} \operatorname{mod} k = 0$ & $r=1$ & Full rank & $d_{\text{in}}+d_{\text{out}} $\\
%     \midrule
%     UniKA & $d_{\text{in}} \operatorname{mod} k = d_{\text{in}} \operatorname{mod} k = 0$ & $\min({d_{\text{in}},d_{\text{out}}})$ & $r$ & $r(\frac{r_1}{r_2}d_{\text{in}}+\frac{r_2}{r_1}d_{\text{out}})$\\
%     \bottomrule
%     \end{tabular}
% }
% \end{center}
% \end{table*}
\begin{table*}
\caption{Constraints on the component configurations in previous adapter-based methods.}
\label{tab:framework}
\begin{center}
\resizebox{0.74\textwidth}{!}{
    \begin{tabular}{@{}ccc@{}}
    \toprule
    \textbf{Method}  & \textbf{Constraint on $r_1$ and $r_2$} & \textbf{Constraint on $r$} \\  \midrule
    \multirow{2}{*}{Full fine-tuning} & $r_1 ,r_2 = 1,d_{\text{in}}$ or $r_1, r_2 = d_{\text{out}}, 1$ & No constraint\\
    ~ & $r_1  = r_2 = 1$ & $r \geq \min({d_{\text{in}},d_{\text{out}}})$ \\
    LoRA~\citep{hu2022lora} & $r_1  =r_2 = 1$ & $r < \min({d_{\text{in}},d_{\text{out}}})$\\
    KronA~\citep{edalati2022krona} & $r_1 = r_2 $ & $r = 1$\\
    \midrule
    Ours & No constraint & No constraint \\
    \bottomrule
    \end{tabular}
}
\end{center}
\end{table*} 
\subsection{Notations} 
For a matrix $\mK$, we denote its spectral norm and Frobenius norm by $\|\mK\|_2$ and $\|\mK\|_F$, respectively. We use $\mU_r(\mK)$ to denote the top-$r$ left singular subspace of $\mK$, and $\mU_{r,\perp}(\mK)$ to denote its orthogonal complement. Similarly, $\mV_r(\mK)$ and $\mV_{r,\perp}(\mK)$ denote the top-$r$ right singular subspace of $\mK$ and its orthogonal complement, respectively. We define the vectorization operator by $\operatorname{vec}(\cdot)$. We denote the input and output dimensions of the adapters by $d_{\text{in}}$ and $d_{\text{out}}$, respectively. Throughout the paper, we assume by default that the component configurations satisfy $(r_1 \operatorname{mod} d_{\text{out}}) = (r_2 \operatorname{mod} d_{\text{in}}) = 0$.

% SoKA~\citep{chong2025singular} extracts the principal components of the full weight update and represents them as a mixture of Kronecker products for initialization.

\section{Preliminaries}
\subsection{Low-Rank Adapters}
\label{sec:lora}
Low-Rank Adaptation (LoRA)~\citep{hu2022lora} is a state-of-the-art adapter-based method designed for linear layers in large-scale models. Rather than updating the full weight matrix $\mW \in \mathbb{R}^{d_{\text{out}} \times d_{\text{in}}}$, LoRA introduces two low-rank matrices, $\mA \in \mathbb{R}^{r \times d_{\text{in}}}$ and $\mB \in \mathbb{R}^{d_{\text{out}} \times r}$, such that the weight update is expressed as
\begin{equation} ~\label{equ:lora}
    \mW = \mW_0 + \Delta\mW = \mW_0 + \mB \mA,
\end{equation}
where $\mW_0$ denotes the pre-trained weight, which remains frozen during training. This formulation enables efficient adaptation to downstream tasks while requiring substantially fewer trainable parameters.

\subsection{Kronecker Adapters}
\label{sec:ka}
KronA~\citep{edalati2022krona} first introduces the Kronecker product into the adapter-based framework. Unlike vanilla LoRA, which relies on a standard matrix product, KronA employs a single Kronecker product to enable higher rank updates while significantly reducing the number of trainable parameters. In this paper, we consider a more general formulation of Kronecker adapters, namely:
\begin{equation}~\label{equ:ka}
\mW = \mW_0 + \Delta\mW =\mW_0 + \sum_{i=1}^r \mB^{(i)} \otimes \mA^{(i)}.
\end{equation}
Here, $\mA^{(i)} \in \sR^{r_1 \times \frac{d_{\text{in}}}{r_2}}$ and $\mB^{(i)} \in \sR^{\frac{d_{\text{out}}}{r_1} \times r_2}$ for $\forall i\in[1,r]$, where $r_1$ and $r_2$ are hyperparameters that determine the dimensions of the Kronecker components. We refer to the selection of $r_1$, $r_2$ and $r$ as the \emph{component design} for Kronecker adapters. As summarized in Table~\ref{tab:framework}, both LoRA and KronA are special cases of~\Eqref{equ:ka} with additional constraints. This formulation is driven by the Kronecker product singular value decomposition~\citep{van2000ubiquitous,batselier2017constructive}, as formalized in Definition~\ref{def:decomp}. 

\begin{definition}~\label{def:decomp}
    For a matrix $\mK \in \sR^{d_{\text{out}} \times d_{\text{in}}}$, its Kronecker product singular value decomposition is given by:
    \begin{equation}
        \mK = \sum_{i=1}^{r^*} \sigma_i \mB^{(i)} \otimes \mA^{(i)},
    \end{equation}
    where $\mA^{(i)} \in \sR^{r_1 \times \frac{d_{\text{in}}}{r_2}}$ and $\mB^{(i)} \in \sR^{\frac{d_{\text{out}}}{r_1} \times r_2}$, if and only if
    \begin{equation}
         \widetilde{\mK}  = \operatorname{Kreshape}(\mK) = \widetilde{\mA} \mSigma \widetilde{\mB}^\top
    \end{equation}
    is the singular value decomposition of $\widetilde{\mK}$, where $\widetilde{\mA} = [\operatorname{vec}(\mA^{(1)}), \cdots, \operatorname{vec}(\mA^{(r^*)})]$, $\widetilde{\mB} = [\operatorname{vec}(\mB^{(1)}), \cdots, \operatorname{vec}(\mB^{(r^*)})]$ and $\mSigma = \operatorname{diag}(\sigma_1, \cdots, \sigma_{r^*})$. The function $\operatorname{Kreshape}(\cdot)$ is defined in Definition~\ref{def:kreshape}.
\end{definition}

Despite recent progress, component design for Kronecker adapters remains challenging. KronA~\cite{edalati2022krona} enforces $r_1 = r_2$ and $r = 1$ in practice to minimize the parameter budget, but this overly restrictive setting leads to inferior performance. MoKA-MoE~\citep{yu2025moka} adopts the same configuration as KronA and incorporates a Mixture-of-Experts mechanism into the Kronecker components, which improves performance at the cost of substantial parameter budget and computational overhead. The work most closely related to ours is MoKA~\citep{sadeghi2025moka}, which models Kronecker adapters as a sum of Kronecker components with heterogeneous choices of $r_1$ and $r_2$ across different components. However, MoKA does not provide guidelines for such design and instead relies on manual adjustment, making it difficult to deploy in practice. More importantly, these studies fail to elucidate how the choices of $r_1$, $r_2$ and $r$ influence the performance of Kronecker adapters. 

\section{Main Results}
In Section~\ref{sec:ka}, we formulate component design as the selection of three hyperparameters: $r_1$ and $r_2$, which control the dimensions of the Kronecker components, and $r$, which determines the number of components. In this section, we systematically demonstrate and examine the full potential of Kronecker adapters through component design. In Section~\ref{sec:matter}, we emphasize the central role of component design in Kronecker adapters. In Section~\ref{sec:alignment}, we investigate how component design governs the theoretical alignment between Kronecker adapters and full fine-tuning, and derive corresponding principles for effective design. In Section~\ref{sec:practice}, we provide guidelines for selecting component configurations under a fixed parameter budget and further introduce a tailored training stabilization strategy.
\subsection{Component Design Matters for Kronecker Adapters}
\label{sec:matter}
% In this section, we highlight the central role of component design for Kronecker adapters. First, we show that component design allows flexible control over both the number of trainable parameters and the maximum attainable rank. Moreover, through appropriate component design, Kronecker adapters can achieve full-rank updates with a minimal number of parameters. We then point out that the empirical performance of Kronecker adapters does not scale directly with the attainable rank. Rather, it is independently influenced by the choice of $r_1$, $r_2$, and $r$.
\paragraph{Kronecker adapters enable higher rank.} We first identify the fundamental advantage of Kronecker adapters, which can achieve higher rank updates through component design under a fixed parameter budget. Under the formulation in~\Eqref{equ:ka}, the number of trainable parameters in $\Delta \mW$ can be expressed by:
\begin{equation}~\label{equ:param}
\operatorname{param}(\Delta \mW) \propto r\left(\frac{r_1}{r_2}+\frac{r_2}{r_1}\right).
\end{equation}
We define the maximum attainable rank of $\Delta \mW$ as the highest possible rank achievable by $\Delta \mW$, denoted by $\overline{\operatorname{rank}}(\Delta \mW)$. Since $\Delta \mW$ is a sum of $r$ Kronecker components, its rank is upper bounded by the sum of the ranks of individual components, namely:
\begin{equation}~\label{equ:rank}
    \begin{aligned}
        \overline{\operatorname{rank}}(\Delta \mW) & = \sum_{i=1}^r \overline{\operatorname{rank}}(\mB^{(i)} \otimes \mA^{(i)}) \\
                                                   & = r \overline{\operatorname{rank}}(\mB^{(i)})\overline{\operatorname{rank}}(\mA^{(i)}) \\
                                                   & = rr_1r_2,
    \end{aligned}
\end{equation}
where we use the property $\operatorname{rank}(\mB^{(i)} \otimes \mA^{(i)}) = \operatorname{rank}(\mB^{(i)})\operatorname{rank}(\mA^{(i)})$ and assume each component attains ranks $r_1$ and $r_2$, respectively. Building upon~\Eqref{equ:param} and~\Eqref{equ:rank}, we obtain the following property for Kronecker adapters, as formalized in Remark~\ref{rem:eff}.
\begin{remark}~\label{rem:eff}
Under a fixed parameter budget, Kronecker adapters can always achieve a higher attainable rank than vanilla LoRA by setting $r_1 = r_2 > 1$.
% For a fixed $\operatorname{param}(\Delta \mW)$, vanilla LoRA attains the minimum possible $\overline{\operatorname{rank}}(\Delta \mW)$ among Kronecker adapters, corresponding to the degenerate choice $r_1 = r_2 = 1$.
\end{remark}
% \begin{remark}~\label{rem:eff}
%     Kronecker adapters can achieve full-rank updates with a minimal parameter budget by taking  $r_1 = \sqrt{d_{\text{out}}}$, $r_2 = \sqrt{d_{\text{in}}}$ and $r=1$. 
% \end{remark}

\paragraph{Unleash the high rank potential through component design.} 
Through the above analysis, we observe that the maximum attainable rank of Kronecker adapters grows linearly with $r_1$, $r_2$, and $r$. However, this increase does not translate into consistent empirical gains. As shown in Table~\ref{tab:inconsistency}, performance can even deteriorate when the adapter reaches full rank. We attribute this discrepancy to the fundamentally different roles played by $r_1$, $r_2$, and $r$ in shaping empirical performance. Specifically, under the same attainable rank $8$, increasing $r$ or $r_2$ leads to clear performance improvements, whereas increasing $r_1$ degrades performance.

These observations indicate that the expressive capacity of Kronecker adapters cannot be fully characterized by the attainable rank alone. Instead, how the rank is realized through different component configurations plays a crucial role in determining empirical performance. Thus, rank should be viewed as a necessary but insufficient indicator of the capacity of Kronecker adapters. This motivates a more fine-grained analysis of the roles of $r_1$, $r_2$, and $r$ beyond their contribution to rank, which we investigate in the following section.

% However, this relationship does not translate uniformly into empirical performance, as the effects of these three hyperparameters are markedly inconsistent in practice. To substantiate this observation, we evaluate Kronecker adapters under different component configurations, systematically isolating the individual effects of $r_1$, $r_2$, and $r$. Throughout training, we track the rank of the weight update, $\operatorname{rank}(\Delta \mW)$, to ensure that the adapters reach their maximum attainable rank in practice. The results, as summarized in Table~\ref{tab:inconsistency}, show that under the same effective rank $\operatorname{rank}(\Delta \mW)=8$, increasing $r$ or $r_2$ yields clear performance improvements, whereas increasing $r_1$ leads to performance degradation. 
\begin{table}
\centering
    \centering
    \caption{Inconsistency in component design.}
    \label{tab:inconsistency}
    \resizebox{0.32\textwidth}{!}{
        \begin{tabular}{@{}cccccc@{}}
        \toprule
        $\overline{\operatorname{rank}}(\Delta \mW)$ & $r_1,r_2,r$ & GSM8k \\ \midrule
        4 &$2,2,1$ & $49.93 _{\pm 1.25}$ \\
        4096 &$64,64,1$ & $49.00 _{\pm 0.41}$ \\
        \midrule
        8 &$4,2,1$ & $49.58 _{\pm 0.34}$ \\
        8 &$2,4,1$ & $50.45 _{\pm 0.54}$  \\
        8 &$2,2,2$ & $51.58 _{\pm 0.18}$ \\
        \bottomrule
        \end{tabular}
    }
\end{table}

\subsection{Component Designed Kronecker Adapters}
\label{sec:alignment}
In this section, we theoretically analyze how different component configurations influence the performance of Kronecker adapters, thereby providing principled insights into effective component design. 

\paragraph{Problem settings.} We investigate whether Kronecker adapters exhibit a similar ``subspace alignment'' with the first step gradient of full fine-tuning as vanilla LoRA~\citep{zhang2025lora}. More importantly, we aim to understand how this alignment property is affected by component design. To simplify the analysis, we consider a linear setting following seminal LoRA-related theoretical studies~\citep{hayou2024lora+,zhang2024riemannian,zhang2025lora}, in which the loss of Kronecker adapters is defined as: 
\begin{equation}~\label{equ:losska}
    \mathcal{L}_{\text{KA}} = \frac{1}{2N} \| \mY - (\mW_0 + \sum_{i=1}^{r}\mB^{(i)} \otimes \mA^{(i)} )\mX \|_F^2,
\end{equation}
where $\mX = [\vx_1, \cdots, \vx_N] \in \sR^{d_{\text{in}} \times N}$ consists of $N$ i.i.d. input samples drawn from an isotropic, zero-mean sub-Gaussian distribution, and $\mY = [\vx_1, \cdots, \vx_N] \in \sR^{d_{\text{out}} \times N}$ denotes the corresponding ground-truth outputs. The loss in~\Eqref{equ:losska} can be minimized using the following gradient descent updates with learning rate $\eta$:
\begin{equation}~\label{equ:gradka}
\begin{aligned}
    \mA_{t+1}^{(i)} = \mA_{t}^{(i)} - \eta \nabla_{\mA_{t}^{(i)}} \mathcal{L}_{\text{KA}}, \\
    \mB_{t+1}^{(i)} = \mB_{t}^{(i)} - \eta \nabla_{\mB_{t}^{(i)}} \mathcal{L}_{\text{KA}}, 
\end{aligned}  
\end{equation}
where $\mA_{t}^{(i)}$ and $\mB_{t}^{(i)}$ denote the values of $\mA^{(i)}$ and $\mB^{(i)}$ after $t$ steps of gradient descent, respectively. Accordingly, the loss of full fine-tuning is given by:
\begin{equation}~\label{equ:lossfull}
    \mathcal{L}_{\text{full}} = \frac{1}{2N} \| \mY - \mW\mX \|_F^2.
\end{equation}
As a result, the first step gradient of full fine-tuning is:
\begin{equation}~\label{equ:gradfull}
    \mG_0 = \frac{1}{N} (\mY - \mW_0\mX) \mX^\top.
\end{equation}
Building on the concept of Kronecker product singular value decomposition in Definition~\ref{def:decomp}, we quantify the alignment between $\widetilde{\mA}_t$ and the gradient $\mG_0$ following~\citet{zhang2025lora} with\footnote{We refer the alignment of $\widetilde{\mB}_{t}$ to Appendix~\ref{sec:alignmentB} as it leads to the same principles as $\widetilde{\mA}_{t}$.}:
\begin{equation}
    \| \mU_{r^*,\perp}^\top(\widetilde{\mG}_0) \mU_{r^*}(\widetilde{\mA}_t) \|_2
\end{equation}
where $r^*$ is the rank of $\widetilde{\mG}_0 = \operatorname{Kreshape}(\mG_0)$, $\widetilde{\mA}_t = [\operatorname{vec}(\mA_{t}^{(1)}), \cdots, \operatorname{vec}(\mA_{t}^{(r)})]$, $\mU_{r^*}(\widetilde{\mA}_t)$ denotes the top-$r^*$ left singular subspace of $\widetilde{\mA}_t$, and $\mU_{r^*,\perp}(\widetilde{\mG}_0)$ denotes the  orthogonal complement of the top-$r^*$ left singular subspace of $\widetilde{\mG}_0$.

Following the above settings, we build the alignment between the top-$r^*$ left Kronecker singular subspaces of $\mG_0$ and $\widetilde{\mA}_{t}$ in Theorem~\ref{theorem:align}. The full version of Theorem~\ref{theorem:align} and the corresponding proof are referred to Appendix~\ref{sec:proofa}.
\begin{theorem}[A simplified version of Theorem~\ref{theorem:alignmentA} with $r^*\leq r< 2r^*$.]~\label{theorem:align}
    Under the settings described in Section~\ref{sec:alignment} and taking $r^*\leq r< 2r^*$, we consider random Gaussian initialization for $\widetilde{\mA}_{0}$ with $[\widetilde{\mA}_{0}]_{ij} \sim \mathcal{N}(0, \alpha^2)$ and zero initialization for $\widetilde{\mB}_{0}$ with $[\widetilde{\mB}_{0}]_{ij} = 0$ and
    \begin{equation}~\label{equ:alpha}
        \alpha \leq \left( \dfrac{\theta \xi \sqrt{r_2}}{24 r \sqrt{r_1d_{\text{in}}}} \right)^{\frac{3\kappa}{2}}
        \sqrt{\dfrac{\sigma_1(\widetilde{\mG}_0)r_2}{94.5 \sqrt{r}r_1d_{\text{in}}}},
    \end{equation}
    where $\kappa$ is the condition number of $\widetilde{\mG}_0$. Then if we run gradient descent for $t^*$ steps on the Kronecker adapter with:
    \begin{equation}~\label{equ:t}
        t^* \lesssim \dfrac{\ln\!\left( \frac{24 r \sqrt{r_1d_{\text{in}}}}{\theta \xi \sqrt{r2}} \right)}
    {\ln\!\left( 1 + \eta \sigma_{r^*}(\widetilde{\mG}_0) \right)},
    \end{equation}
    we have the following alignment for $\forall \theta \in (0,1)$:
    \begin{equation}
        \| \mU_{r^*,\perp}^\top(\widetilde{\mG}_0) \mU_{r^*}(\widetilde{\mA}_{t^*}) \|_2 \leq \theta,
    \end{equation}
    with probability at least $1 - C_1 \exp(-d_{\text{in}}\frac{r_1}{r_2}) - (C_2 \xi)^{\,r - r^* + 1}
    - C_3 \exp(-r) - C_4 \exp(-N)$ for some universal constants $C_1$, $C_2$, $C_3$, $C_4$. 
\end{theorem}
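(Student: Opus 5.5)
The plan is to reduce the Kronecker adapter dynamics to LoRA-type dynamics on the reshaped matrices, and then invoke the spectral-initialization alignment argument of \citet{zhang2025lora}. The key identity is that $\operatorname{Kreshape}$ is a linear isometry (it only permutes entries) satisfying $\operatorname{Kreshape}(\mB\otimes\mA)=\operatorname{vec}(\mA)\operatorname{vec}(\mB)^\top$. Hence $\operatorname{Kreshape}(\Delta\mW)=\widetilde{\mA}\widetilde{\mB}^\top$, and the loss in \Eqref{equ:losska}, viewed as a function of $(\widetilde{\mA},\widetilde{\mB})$, is a composition of a quadratic in $\Delta\mW$ with a LoRA-like bilinear map.

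\textbf{Reduced gradient iteration.} Writing $\widetilde{\mathcal{R}}_t=\operatorname{Kreshape}\bigl(\frac1N(\mY-\mW_t\mX)\mX^\top\bigr)$, the chain rule gives
\[
\widetilde{\mA}_{t+1}=\widetilde{\mA}_t+\eta\widetilde{\mathcal{R}}_t\widetilde{\mB}_t,\qquad
\widetilde{\mB}_{t+1}=\widetilde{\mB}_t+\eta\widetilde{\mathcal{R}}_t^\top\widetilde{\mA}_t,
\]
with $\widetilde{\mathcal{R}}_0=\widetilde{\mG}_0$. Since $\mX\mX^\top/N=\mI+\mE$ with $\|\mE\|_2\lesssim\sqrt{d_{\text{in}}/N}$ with probability at least $1-C_4\exp(-N)$, we get
\[
\widetilde{\mathcal{R}}_t=\widetilde{\mG}_0-\operatorname{Kreshape}\bigl(\Delta\mW_t(\mI+\mE)\bigr).
\]
In the small-initialization regime $\Delta\mW_t$ stays tiny, so the dynamics are a perturbed power iteration:
\[
\begin{pmatrix}\widetilde{\mA}_{t}\\ \widetilde{\mB}_{t}\end{pmatrix}\approx
\begin{pmatrix}\mI & \eta\widetilde{\mG}_0\\ \eta\widetilde{\mG}_0^\top & \mI\end{pmatrix}^{t}
\begin{pmatrix}\widetilde{\mA}_{0}\\ 0\end{pmatrix}.
\]
This block matrix has eigenvalues $1\pm\eta\sigma_j(\widetilde{\mG}_0)$, so the component of $\widetilde{\mA}_t$ in $\mU_{r^*}(\widetilde{\mG}_0)$ grows at least like $(1+\eta\sigma_{r^*})^t$. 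The component in $\mU_{r^*,\perp}(\widetilde{\mG}_0)$, which is the left kernel of $\widetilde{\mG}_0$, stays of order $\alpha$ up to nonlinear corrections.

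\textbf{Order of steps.} The steps are as follows.
\begin{enumerate}
\item Bound the linear part. Decompose $\widetilde{\mA}_t=\mU_{r^*}\mU_{r^*}^\top\widetilde{\mA}_t+\mU_{r^*,\perp}\mU_{r^*,\perp}^\top\widetilde{\mA}_t$. Show that the signal part equals $\mU_{r^*}\mathrm{diag}\bigl(\cosh\text{-like}\ (1+\eta\sigma_j)^t\text{ terms}\bigr)\mU_{r^*}^\top\widetilde{\mA}_0$ plus error. Lower bound its $r^*$-th singular value by $(1+\eta\sigma_{r^*})^t\,\sigma_{r^*}(\mU_{r^*}^\top\widetilde{\mA}_0)/2$.
\item Control the initialization. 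The matrix $\mU_{r^*}^\top\widetilde{\mA}_0$ is an $r^*\times r$ Gaussian with i.i.d.\ $\mathcal{N}(0,\alpha^2)$ entries. Since $r\ge r^*$, the smallest-singular-value bound of Rudelson--Vershynin gives $\sigma_{r^*}\ge\alpha\xi(\sqrt r-\sqrt{r^*-1})$ with probability at least $1-(C_2\xi)^{r-r^*+1}-C_3e^{-r}$. The regime $r<2r^*$ is where one simplifies $\sqrt r-\sqrt{r^*-1}\gtrsim 1/\sqrt r$, which yields the factor $r$ inside the logarithm. Meanwhile $\|\widetilde{\mA}_0\|_2\lesssim\alpha\sqrt{r_1d_{\text{in}}/r_2}$ with probability at least $1-C_1e^{-d_{\text{in}}r_1/r_2}$, because $\widetilde{\mA}_0$ has $r_1d_{\text{in}}/r_2$ rows. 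This is the source of the $\sqrt{r_1d_{\text{in}}/r_2}$ factors.
\item Bound the nonlinear and noise error. Show by induction that for $t\le t^*$, the quantity $\|\Delta\mW_t\|_F\le\|\widetilde{\mA}_t\|_2\|\widetilde{\mB}_t\|_2$ remains $\lesssim\alpha^2(1+\eta\sigma_1)^{2t}$. This uses the choice of $\alpha$ in \Eqref{equ:alpha}, where the exponent $3\kappa/2$ compensates for $(1+\eta\sigma_1)^{t^*}\le(1+\eta\sigma_{r^*})^{\kappa t^*}$. It follows that the perturbation pushed into $\mU_{r^*,\perp}$ is at most a $\theta/2$ fraction of the signal.
\item Conclude via Wedin's $\sin\Theta$ theorem. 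We obtain
\[
\|\mU_{r^*,\perp}^\top\mU_{r^*}(\widetilde{\mA}_t)\|_2\le\frac{\|\mU_{r^*,\perp}^\top\widetilde{\mA}_t\|_2}{\sigma_{r^*}(\mU_{r^*}^\top\widetilde{\mA}_t)}.
\]
This ratio is $\lesssim\frac{\sqrt{r_1d_{\text{in}}/r_2}\,r}{\xi}(1+\eta\sigma_{r^*})^{-t}$, which is at most $\theta$ once $t$ reaches the value in \Eqref{equ:t}.
\end{enumerate}

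\textbf{Main obstacle.} The hard part is step 3: keeping the nonlinear feedback $\operatorname{Kreshape}(\Delta\mW_t)$ and the sample-covariance error $\mE$ uniformly small over $t^*$ steps. This is harder than in plain LoRA because $\operatorname{Kreshape}(\Delta\mW_t\mE)$ does not factor through the reshape. I would bound it crudely by the isometry, $\|\operatorname{Kreshape}(\Delta\mW\mE)\|_2\le\|\Delta\mW\|_F\|\mE\|_2$, and absorb it into the $\alpha$ condition. If that turns out too loose, I would mirror the inductive bookkeeping of \citet{zhang2025lora} verbatim on the reshaped variables, with the dimension $d_{\text{in}}$ replaced by $r_1d_{\text{in}}/r_2$.
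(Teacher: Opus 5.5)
Your proposal is correct in outline and follows essentially the same route as the paper: linearize the reshaped dynamics around $\mZ_{t+1}=\mH\mZ_t$, bound the cubic nonlinear error by induction up to $t^{lin}$, control $\|\widetilde{\mA}_0\|_2$ and $\sigma_{\min}(\mU_{r^*}^\top\widetilde{\mA}_0)$ by Gaussian estimates, and obtain the $\alpha$-condition with exponent $3\kappa/2$ by requiring the alignment time not to exceed $t^{lin}$. One slip needs fixing in Step~3: $\|\Delta\mW_t\|_F=\|\widetilde{\mA}_t\widetilde{\mB}_t^\top\|_F\le\sqrt{r}\,\|\widetilde{\mA}_t\|_2\|\widetilde{\mB}_t\|_2$, not the bound without $\sqrt{r}$; moreover, at failure probability $e^{-N}$ you only get $\|\mX\mX^\top/N\|_2=O(1)$ rather than a $\sqrt{d_{\text{in}}/N}$ deviation, so this $\sqrt{r}$ survives and is exactly the one in the factor $94.5\sqrt{r}$ of the $\alpha$-condition.
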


\begin{remark}[Principles for component design.]~\label{remark:principle}
Building on Theorem~\ref{theorem:align}, we show that the theoretical alignment between Kronecker adapters and full fine-tuning depends highly on component design, i.e., the choice of $r_1$, $r_2$ and $r$. Specifically, we desire the upper bound of $\alpha$ in~\Eqref{equ:alpha} to be sufficiently large, so that the empirical variance used in practice lies within this range. Meanwhile, we aim for the upper bound of $t^*$ in~\Eqref{equ:t} to be as small as possible, in order to accelerate training convergence. Both metrics favor a component design in which $r_2$ is chosen as large as possible, while $r$ and $r_1$ are kept as small as possible, subject to the constraint $r \geq r^*$. Motivated by the insights from Theorem~\ref{theorem:align}, we therefore derive fundamental principles for component design, which are summarized as follows.
\begin{itemize}
     \item [1. ] Increasing $r_1$ tends to degrade the performance of Kronecker adapters.
     \item [2. ] Increasing $r_2$ consistently improves the performance of Kronecker adapters.
     \item [3. ] Increasing $r$ does not lead to a sustained improvement in the performance of Kronecker adapters.
 \end{itemize}
\end{remark}

\begin{table*}[t]
    \centering
    \begin{minipage}[t]{0.21\textwidth}
    \centering
    \caption{CDKA with different $r_1$ for fixed $r_2$ and $r$.}
    \label{tab:diffr1}
    \resizebox{0.73\textwidth}{!}{
        \begin{tabular}{@{}cccc@{}}
        \toprule
        \textbf{$r_1$} & GSM8k \\ \midrule
        \rowcolor{gray!20} $2$ & $\mathbf{49.93}_{\pm 1.25}$ \\
        $4$ & $49.58 _{\pm 0.34}$ \\
        $8$ & $48.80 _{\pm 1.09}$  \\
        $16$ & $49.89 _{\pm 0.27}$ \\
        \bottomrule
        \end{tabular}
    }
    \end{minipage}
    \hspace{0.10\textwidth}
    \begin{minipage}[t]{0.21\textwidth}
    \centering
    \caption{CDKA with different $r_2$ for fixed $r_1$ and $r$.}
    \label{tab:diffr2}
    \resizebox{0.73\textwidth}{!}{
        \begin{tabular}{@{}cccc@{}}
        \toprule
        \textbf{$r_2$} & GSM8k \\ \midrule
        $2$ & $49.93 _{\pm 1.25}$ \\
        $4$ & $50.45 _{\pm 0.54}$ \\
        $8$ & $53.17 _{\pm 0.43}$  \\
        \rowcolor{gray!20} $16$ & $\mathbf{53.93} _{\pm 0.46}$ \\
        \bottomrule
        \end{tabular}
    }
    \end{minipage}
    \hspace{0.10\textwidth}
    \begin{minipage}[t]{0.21\textwidth}
    \centering
    \caption{CDKA with different $r$ for fixed $r_1$ and $r_2$.}
    \label{tab:diffr}
    \resizebox{0.73\textwidth}{!}{
        \begin{tabular}{@{}cccc@{}}
            \toprule
            \textbf{$r$} & GSM8k \\ \midrule
            $1$ & $49.93 _{\pm 1.25}$ \\
            \rowcolor{gray!20} $4$ & $\mathbf{54.56} _{\pm 1.62}$  \\
            $16$ & $54.18 _{\pm 0.72}$ \\
            $64$ & $54.26 _{\pm 0.69}$ \\
            \bottomrule
    \end{tabular}
    }
    \end{minipage}
    
    \vspace{0.4cm}
    \begin{minipage}[t]{0.25\textwidth}
    \centering
    \caption{CDKA with different $r_1$ and $r_2$ under the same parameter budget.}
    \label{tab:diffr1r2}
    \resizebox{0.73\textwidth}{!}{
            \begin{tabular}{@{}cccc@{}}
            \toprule
            $r_1, r_2, r$ & GSM8k \\ \midrule
            \rowcolor{gray!20} 2,2,8 & $\mathbf{56.71} _{\pm 0.38}$ \\
            4,4,8 & $56.58 _{\pm 0.42}$ \\
            8,8,8 & $56.38 _{\pm 0.90}$  \\
            16,16,8 & $56.56 _{\pm 0.64}$ \\
            \bottomrule
            \end{tabular}
    }
    \end{minipage}
    \hspace{0.05\textwidth}
    \begin{minipage}[t]{0.25\textwidth}
    \centering
    \caption{CDKA with different $r$ and $r_2$ under the same parameter budget.}
    \label{tab:diffrr2}
    \resizebox{0.68\textwidth}{!}{
            \begin{tabular}{@{}cccc@{}}
            \toprule
            $r_1, r_2, r$ & GSM8k \\ \midrule
            \rowcolor{gray!20} $2,2,8$  & $\mathbf{56.71} _{\pm 0.38}$ \\
            $2,16,2$ & $55.17_{\pm 1.04}$ \\ \midrule
            $2,2,32$ & $56.15 _{\pm 0.75}$  \\
            \rowcolor{gray!20} $2,16,8$ & $\mathbf{57.95} _{\pm 0.43}$ \\
            \bottomrule
            \end{tabular}
    }
    \end{minipage}
    \hspace{0.05\textwidth}
    \begin{minipage}[t]{0.3\textwidth}
    \centering
    \caption{CDKA with different initialization strategies.}
    \label{tab:init}
    \resizebox{0.88\textwidth}{!}{
        \begin{tabular}{@{}cccc@{}}
        \toprule
        Init Method  & GSM8k \\ \midrule
        \rowcolor{gray!20} $\mA^{(i)} \sim$ Ku, $\mB^{(i)} = 0$ & $\mathbf{56.71}_{\pm 0.38}$ \\
        $\mA^{(i)} = 0$, $\mB^{(i)} \sim$ Ku & $55.12_{\pm1.12}$ \\
        $\mA^{(i)} \sim$ Kn, $\mB^{(i)} = 0$ & $52.59_{\pm0.85}$ \\
        $\mA^{(i)} = 0$, $\mB^{(i)} \sim$ Kn & $51.86_{\pm0.53}$ \\
        \bottomrule
        \end{tabular}
    }
    \end{minipage}
\end{table*}

\paragraph{Validation for Our Proposed Principles.} Guided by the principle in Remark~\ref{remark:principle}, we show that Kronecker adapters can be effectively improved through component design. We refer to our approach as Component Designed Kronecker Adapters (CDKA). To validate the practical applicability of these principles, we investigate how the empirical behavior of CDKA varies as $r$, $r_1$ and $r_2$ are individually adjusted. Specifically, we fine-tune LLaMA-2-7B~\citep{touvron2023llama} on a 100K subset of MetaMathQA~\citep{yu2024metamath}, and assess generalization on GSM8k~\citep{cobbe2021training}. See Appendix~\ref{sec:more_studies} for more results.
\begin{itemize}
    \item \textbf{Validation for $r_1$.} We examine the performance of CDKA as $r_1$ varies while holding $r$ and $r_2$ fixed, with results shown in Table~\ref{tab:diffr1}. We observe a general degradation in the performance of CDKA as $r_1$ increases, suggesting the use of a small $r_1$.
    \item \textbf{Validation for $r_2$.} We examine the performance of CDKA as $r_2$ varies while holding $r$ and $r_1$ fixed. As shown in Table~\ref{tab:diffr2}, the performance increases consistently with $r_2$, indicating the use of a large $r_2$.
    \item \textbf{Validation for $r$.} We now examine the performance of CDKA as $r$ varies with fixed $r_1$ and $r_2$, with results reported in Table~\ref{tab:diffr}. We observe that performance improves substantially as $r$ increases initially, but begins to fluctuate as $r$ continues to grow. This suggests that when $r$ is small, i.e., $r < r^*$, increasing $r$ yields clear performance gains. However, once $r$ exceeds $r^*$, further increasing $r$ leads to performance instability and can even degrade performance.
\end{itemize}

\subsection{Component Design in Practice}
\label{sec:practice}
In this section, we provide practical guidelines for CDKA to further improve its performance. We follow the same experimental settings as described in Section~\ref{sec:alignment}. See Appendix~\ref{sec:more_studies} for more results.
\paragraph{Guidelines for component design under fixed parameter budget.}
Building on the principles in Remark~\ref{remark:principle}, we now give guidelines for component design when the parameter budget $\operatorname{param}(\Delta \mW)$ in~\Eqref{equ:param} is fixed.

\textit{Guideline for choosing $r_1$.} According to the principle stated in Remark~\ref{remark:principle}, increasing $r_1$ leads to performance degradation, whereas increasing $r_2$ consistently improves performance. Since the parameter budget depends on the ratio between $r_1$ and $r_2$, we investigate how the performance of CDKA is affected when they are increased simultaneously. As shown in Table~\ref{tab:diffr1r2}, the performance of CDKA remains stable, which is consistent with our theoretical insights. In practice, we empirically find that keeping a small $r_1 \in [2,4]$ yields slight improvements across different settings.

\textit{Guideline for choosing $r$ and $r_2$.} According to the principle in Remark~\ref{remark:principle}, when $r \geq r^*$, increasing $r_2$ consistently improves CDKA’s performance, whereas increasing $r$ can even lead to degradation. This suggests that, for small parameter budgets, it is important to first increase $r$ to reach $r \geq r^*$. Once the parameter budget is larger, increasing $r_2$ becomes more effective than further increasing $r$. As shown in Table~\ref{tab:diffrr2}, when $r<8$, increasing $r_2$ yields less improvement than increasing $r$, whereas for $r > 8$, increasing $r_2$ is clearly more beneficial. In practice, we empirically find that setting $r^*\in [2,8]$ serves well as a boundary condition for choosing $r$ and $r_2$ across different settings.
% \begin{itemize}
%     \item \textbf{Guideline for choosing $r_1$.} According to the principle stated in Remark~\ref{remark:principle}, increasing $r_1$ leads to performance degradation, whereas increasing $r_2$ consistently improves performance. Since the parameter budget depends on the ratio between $r_1$ and $r_2$, we investigate how the performance of CDKA is affected when both dimensions are increased simultaneously. As shown in Table~\ref{tab:diffr1r2}, the performance of CDKA remains stable, which is consistent with our theoretical insights. In practice, we empirically find that keeping a small $r_1 \in [2,4]$ yields slight improvements across different settings.
%     \item \textbf{Guideline for choosing $r$ and $r_2$.} According to the principle in Remark~\ref{remark:principle}, when $r \geq r^*$, increasing $r_2$ consistently improves CDKA’s performance, whereas increasing $r$ can even lead to degradation. This suggests that, for small parameter budgets, it is important to first increase $r$ to reach $r \geq r^*$. Once the parameter budget is larger, increasing $r_2$ becomes more effective than further increasing $r$. As shown in Table~\ref{tab:diffrr2}, when $r<8$, increasing $r_2$ yields less improvement than increasing $r$, whereas for $r > 8$, increasing $r_2$ is clearly more beneficial. In practice, we empirically find that setting $r^*\in [2,8]$ serves well as a boundary condition for choosing $r$ and $r_2$ across different settings. 
% \end{itemize}

\paragraph{$\lambda$ ensures training stability.}
In practice, we observe that the training stability of CDKA is highly sensitive to the choice of $r_1$, $r_2$, and $r$, as shown in Figure~\ref{figure:norm1}. Under certain component configurations, this instability can even lead to gradient collapse. To mitigate this issue, we introduce a scaling factor $\lambda$ that explicitly depends on $r_1$, $r_2$ and $r$ to ensure the gradient norm is irrelevant of component design, thereby leading to stable and consistent training across all component configurations. The resulting scaled weight update is given by:
\begin{equation}~\label{equ:lambda}
\Delta \mW = \lambda_{r_1, r_2, r}\sum_{i=1}^r \mB^{(i)} \otimes \mA^{(i)} .
\end{equation}
% To ensure that the training stability of CDKA is invariant to component design, we introduce the notion of component stability.
% \begin{definition}~\label{def:stable}
%     CDKA is component-stabilized when the scale of its outputs and the gradient with respect to the inputs are irrelevant of the component configurations $r$, $r_1$ and $r_2$.
% \end{definition}
% This condition guarantees that variations in component configurations do not amplify either forward activations or backward gradients of CDKA. 
Building on this concept, we establish stability conditions for CDKA in Theorem~\ref{theorem:rs}.
\begin{table*}[t]
\centering
\caption{Performance of fine-tuning T5-Base model on the GLUE benchmark. Results marked with (*) are obtained from our reimplementation, using hyperparameters aligned with those reported in the original papers. All other results are sourced
from prior works~\citep{he2025gora,zhang2025lora}.}
\label{tab:glue_results}
\begin{tabular}{lccccccc}
\toprule
\textbf{Method} & \textbf{Params(M)} & \textbf{MNLI} & \textbf{SST-2} & \textbf{CoLA} & \textbf{QNLI} & \textbf{MRPC} & \textbf{Average} \\
\midrule
Full & $226$
& $86.33_{\pm 0.00}$ & $94.75_{\pm 0.21}$ & $80.70_{\pm 0.24}$ & $93.19_{\pm 0.22}$ & $84.56_{\pm 0.73}$ & $87.91$ \\
\midrule
LoRA & $3.24 $
& $85.30_{\pm 0.04}$ & $94.04_{\pm 0.11}$ & $69.35_{\pm 0.05}$ & $92.96_{\pm 0.09}$ & $68.38_{\pm 0.01}$ & $82.08$ \\
PiSSA & $3.24 $
& $85.75_{\pm 0.07}$ & $94.07_{\pm 0.06}$ & $74.27_{\pm 0.39}$ & $93.15_{\pm 0.14}$ & $76.31_{\pm 0.51}$ & $84.71$ \\
rsLoRA & $3.24 $
& $85.73_{\pm 0.10}$ & $94.19_{\pm 0.23}$ & $72.32_{\pm 1.12}$ & $93.12_{\pm 0.09}$ & $52.86_{\pm 2.27}$ & $79.64$ \\
LoRA+ & $3.24 $
& $85.81_{\pm 0.09}$ & $93.85_{\pm 0.24}$ & $77.53_{\pm 0.20}$ & $93.14_{\pm 0.03}$ & $74.43_{\pm 1.39}$ & $84.95$ \\
DoRA & $3.24 $
& $85.67_{\pm 0.09}$ & $94.04_{\pm 0.53}$ & $72.04_{\pm 0.94}$ & $93.04_{\pm 0.06}$ & $68.08_{\pm 0.51}$ & $82.57$ \\
AdaLoRA & $4.86 $
& $85.45_{\pm 0.11}$ & $93.69_{\pm 0.20}$ & $69.16_{\pm 0.24}$ & $91.66_{\pm 0.05}$ & $68.14_{\pm 0.28}$ & $81.62$ \\
GoRA & $3.05$
& $\mathbf{85.91}_{\pm 0.02}$ & $\mathbf{94.68}_{\pm 0.43}$ & $79.86_{\pm 0.35}$ & $\underline{93.27}_{\pm 0.08}$ & $\underline{86.10}_{\pm 0.20}$ & $\underline{87.96}$ \\
LoRA-GA & $3.24 $
& $85.70_{\pm 0.09}$ & $94.11_{\pm 0.18}$ & $80.57_{\pm 0.20}$ & $93.18_{\pm 0.06}$ & $85.29_{\pm 0.24}$ & $87.77$ \\
LoRA-One & $3.24 $
& $\underline{85.89}_{\pm 0.08}$ & $\underline{94.53}_{\pm 0.13}$ & $\mathbf{82.04}_{\pm 0.22}$ & $\mathbf{93.37}_{\pm 0.02}$ & $\mathbf{87.83}_{\pm 0.37}$ & $\mathbf{88.73}$ \\
KronA* & $\mathbf{0.41}$
& $85.05_{\pm 0.06}$ & $93.12_{\pm 0.23}$ & $68.62_{\pm 0.14}$ & $92.66_{\pm 0.06}$ & $82.43_{\pm 0.99}$ & $84.38$ \\
\midrule
CDKA (Ours)  & $\mathbf{0.41}$
& $85.23_{\pm 0.03}$ & $94.15_{\pm 0.43}$ & $\underline{80.82}_{\pm 0.10}$ & $92.87_{\pm 0.04}$ & $83.44_{\pm 0.67}$ & $87.30$ \\
\bottomrule
\end{tabular}
\end{table*}

\begin{theorem}~\label{theorem:rs}
Consider CDKA of the form in~\Eqref{equ:lambda} where $\mA^{(i)}$ is initialized with Kaiming initialization~\citep{he2015delving}, and $\mB^{(i)}$ is initialized to zero. Then the gradient norm of CDKA is irrelevant to $r_1$, $r_2$ and $r$ if and only if $\lambda_{r_1, r_2, r} \in \Theta\left(\frac{1}{\sqrt{r \cdot r_2}}\right).$
% \begin{equation}
% \lambda_{r_1, r_2, r} \in \Theta\left(\frac{1}{\sqrt{r \cdot r_2}}\right).
% \end{equation}
\end{theorem}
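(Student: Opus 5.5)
The plan is an rsLoRA-style scale computation. I compute how large the gradient of the loss is with respect to the trainable factor at initialization, as a function of $r_1$, $r_2$, $r$ and $\lambda$. Then I ask which $\lambda$ makes that magnitude independent of the component configuration.

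Since $\mB^{(i)}_0=0$, the gradient with respect to $\mA^{(i)}$ vanishes at the first step. The informative quantity is therefore the gradient with respect to $\mB^{(i)}$ (and, after one step, the induced output change). Write $\mG=\partial\mathcal{L}/\partial\Delta\mW\in\sR^{d_{\text{out}}\times d_{\text{in}}}$; by assumption its statistics do not depend on the adapter configuration. Using the Kronecker structure, or equivalently the $\operatorname{Kreshape}$ of Definition~\ref{def:decomp}, in which $\mB\otimes\mA$ becomes the rank-one matrix $\operatorname{vec}(\mA)\operatorname{vec}(\mB)^\top$, we get
\[
\operatorname{vec}\bigl(\nabla_{\mB^{(i)}}\mathcal{L}\bigr)=\lambda\,\widetilde{\mG}^\top\operatorname{vec}(\mA^{(i)}),
\]
where $\widetilde{\mG}=\operatorname{Kreshape}(\mG)$. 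A single gradient step then produces the update
\[
\Delta\mW_1=-\eta\lambda^2\sum_{i=1}^r \operatorname{Kreshape}^{-1}\!\Bigl(\operatorname{vec}(\mA^{(i)})\operatorname{vec}(\mA^{(i)})^\top\widetilde{\mG}\Bigr).
\]
This is equivalent to $-\eta\lambda^2\,\operatorname{Kreshape}^{-1}(\widetilde{\mA}_0\widetilde{\mA}_0^\top\widetilde{\mG})$.

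Next I insert the Kaiming initialization. Each $\mA^{(i)}\in\sR^{r_1\times d_{\text{in}}/r_2}$ has fan-in $d_{\text{in}}/r_2$, so its entries have variance $\Theta(r_2/d_{\text{in}})$. The matrix $\widetilde{\mA}_0\widetilde{\mA}_0^\top$ has size $\frac{r_1d_{\text{in}}}{r_2}\times\frac{r_1d_{\text{in}}}{r_2}$, and its expectation is $r\cdot\Theta(r_2/d_{\text{in}})\,\mI$. Hence
\[
\mathbb{E}[\Delta\mW_1]=-\eta\lambda^2\,\Theta\!\Bigl(\frac{r\,r_2}{d_{\text{in}}}\Bigr)\mG .
\]
I then use standard concentration (Wishart-type bounds) to show that the fluctuations do not change the order of magnitude. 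Likewise, I would check that the norm of $\nabla_{\mB}$ summed over components scales as $\lambda^2 r\,r_2/d_{\text{in}}$ in squared norm. Both quantities therefore scale as $\lambda^2 r r_2$ times a configuration-free factor. The $r_1$ dependence cancels because the larger row count $r_1$ is exactly compensated by the reshaped dimension; this cancellation is what I would verify most carefully.

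Both directions of the equivalence follow from this scaling. If $\lambda\in\Theta(1/\sqrt{r r_2})$, then $\lambda^2 r r_2=\Theta(1)$ and the gradient norm is configuration-independent. Conversely, if the gradient norm is $\Theta(1)$ uniformly over $r,r_2$, then $\lambda^2 r r_2=\Theta(1)$ is forced, because the leading-order expression is exactly proportional to $\lambda^2 r r_2$.

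The main obstacle is bookkeeping under $\operatorname{Kreshape}$: checking that $\widetilde{\mG}^\top\operatorname{vec}(\mA)$ really has norm of order $\sqrt{r_2/d_{\text{in}}}\,\|\mG\|_F$, with no hidden $r_1$ factor. I would first compute this in expectation, using $\mathbb{E}\|\widetilde{\mG}^\top a\|^2=\sigma_a^2\|\widetilde{\mG}\|_F^2=\sigma_a^2\|\mG\|_F^2$, and then upgrade to high probability via concentration of sub-Gaussian quadratic forms.
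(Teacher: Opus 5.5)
Your mechanism matches the paper's. With $\mB^{(i)}_0=0$, the first nontrivial contribution is proportional to $\lambda^2\widetilde{\mA}_0\widetilde{\mA}_0^\top$. The Kaiming variance $\sigma^2=\Theta(r_2/d_{\text{in}})$, summed over $r$ components, then produces $\lambda^2 r r_2$. The paper does this bookkeeping per component on the output $\vy_t$ and input gradient $\vg_t$; you do it in the $\operatorname{Kreshape}$ domain.

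\textbf{The gap.} The step that would fail is the promised Wishart concentration for $\Delta\mW_1$, and with it the $r_1$-cancellation you single out. The matrix $\widetilde{\mA}_0\widetilde{\mA}_0^\top$ has rank $r$ but size $D=r_1d_{\text{in}}/r_2$. In the paper's main configurations $r\ll D$, so this matrix is nowhere near $r\sigma^2\mI$. For Gaussian entries and fixed $\mG$ (as at initialization), $\mathbb{E}[(\widetilde{\mA}_0\widetilde{\mA}_0^\top)^2]=r(D+r+1)\sigma^4\mI$. Hence
\[
\mathbb{E}\|\Delta\mW_1\|_F^2=\eta^2\lambda^4 r(D+r+1)\sigma^4\|\mG\|_F^2,
\qquad
\|\mathbb{E}\Delta\mW_1\|_F^2=\eta^2\lambda^4 r^2\sigma^4\|\mG\|_F^2 .
\]
So $\|\Delta\mW_1\|_F\approx\eta\lambda^2\sigma^2\sqrt{rD}\,\|\mG\|_F\propto\lambda^2\sqrt{r r_1 r_2}$. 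The update is fluctuation-dominated, and its size depends on $r_1$: under $\lambda=\Theta(1/\sqrt{r r_2})$ it scales like $\sqrt{r_1/(r r_2)}$. The claim that $\Delta\mW_1$ scales as $\lambda^2 r r_2$ therefore holds only for $\mathbb{E}[\Delta\mW_1]$, and no concentration argument will upgrade it.

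\textbf{The repair.} Rest both directions on your parameter-gradient computation, which is sound and in fact cleaner than the paper's route. At initialization $\Delta\mW_0=0$, so $\mG$ is exactly configuration-free and $\nabla_{\mA^{(i)}}\mathcal{L}=0$. Then $\mathbb{E}\sum_i\|\nabla_{\mB^{(i)}}\mathcal{L}\|_F^2=\lambda^2 r\sigma^2\|\mG\|_F^2$ exactly, with no $r_1$. Apply Hanson--Wright to $\sum_i a_i^\top\widetilde{\mG}\widetilde{\mG}^\top a_i$ with $a_i=\operatorname{vec}(\mA^{(i)}_0)$. This gives relative fluctuations of order $\|\widetilde{\mG}\widetilde{\mG}^\top\|_F/(\sqrt{r}\,\|\widetilde{\mG}\|_F^2)\le 1/\sqrt{r}$. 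So here the mean genuinely sets the scale, and it concentrates once $r$ or the effective rank of $\widetilde{\mG}$ is large.

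If you also want a statement about the induced update or the output, keep it explicitly at the level of expectations. That is the level at which the paper's own proof operates: it reads $\Theta(\lambda^2 r r_2)$ for $\vy_t$ and $\vg_t$ off the $\Theta(r_2)$ entry variance and the $r$ summands, without controlling fluctuations.
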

Proof is referred to Appendix~\ref{sec:rsproof}. Following Theorem~\ref{theorem:rs}, we set the scaling factor as
\begin{equation}~\label{equ:rs}
    \lambda_{r_1, r_2, r} = \frac{\alpha}{\sqrt{r \cdot r_2}},
\end{equation}
\begin{figure}[H]
\centering
\label{figure:delta}
\centering
\begin{subfigure}[b]{0.40\textwidth}
    \centering
    \includegraphics[width=\linewidth]{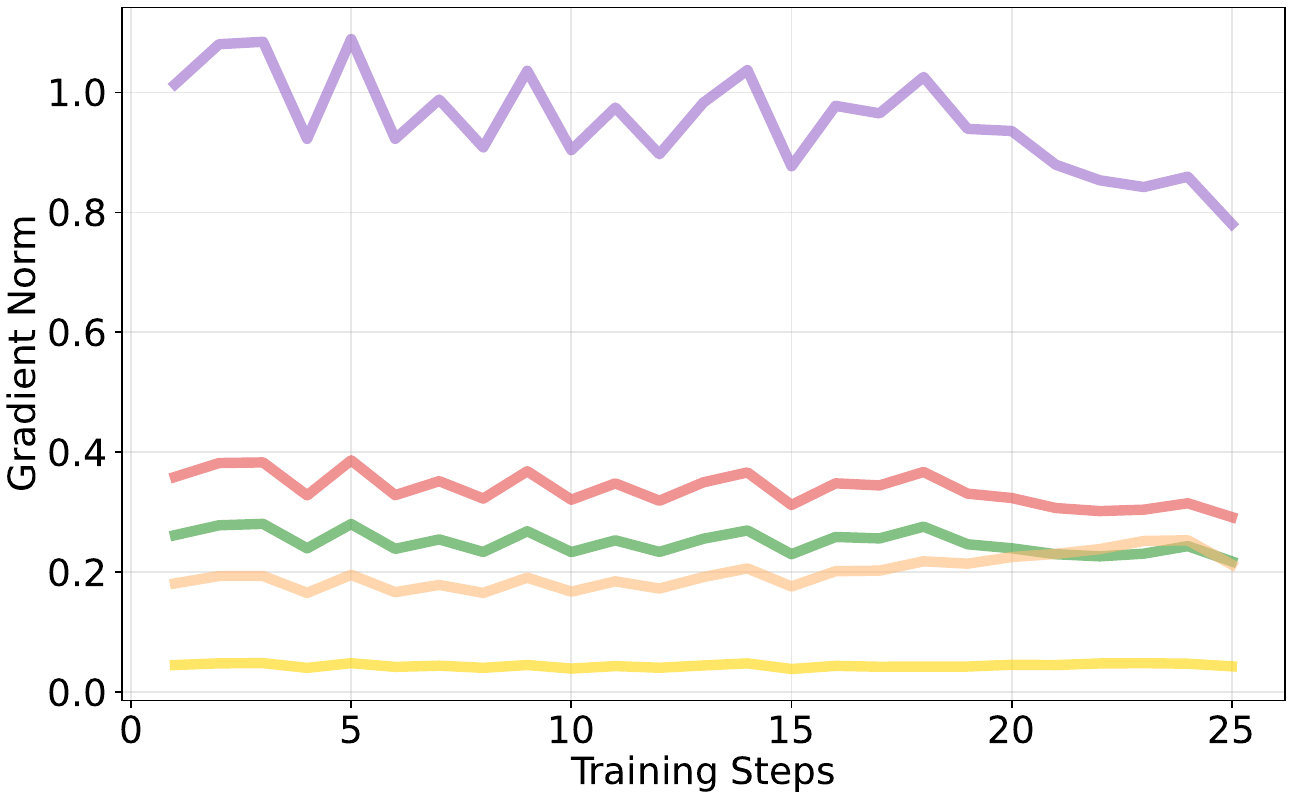}
    \caption{$\lambda \equiv1$}
    \label{figure:norm1}
\end{subfigure}
\hspace{0.08\textwidth}
\begin{subfigure}[b]{0.40\textwidth}
    \centering
    \includegraphics[width=\linewidth]{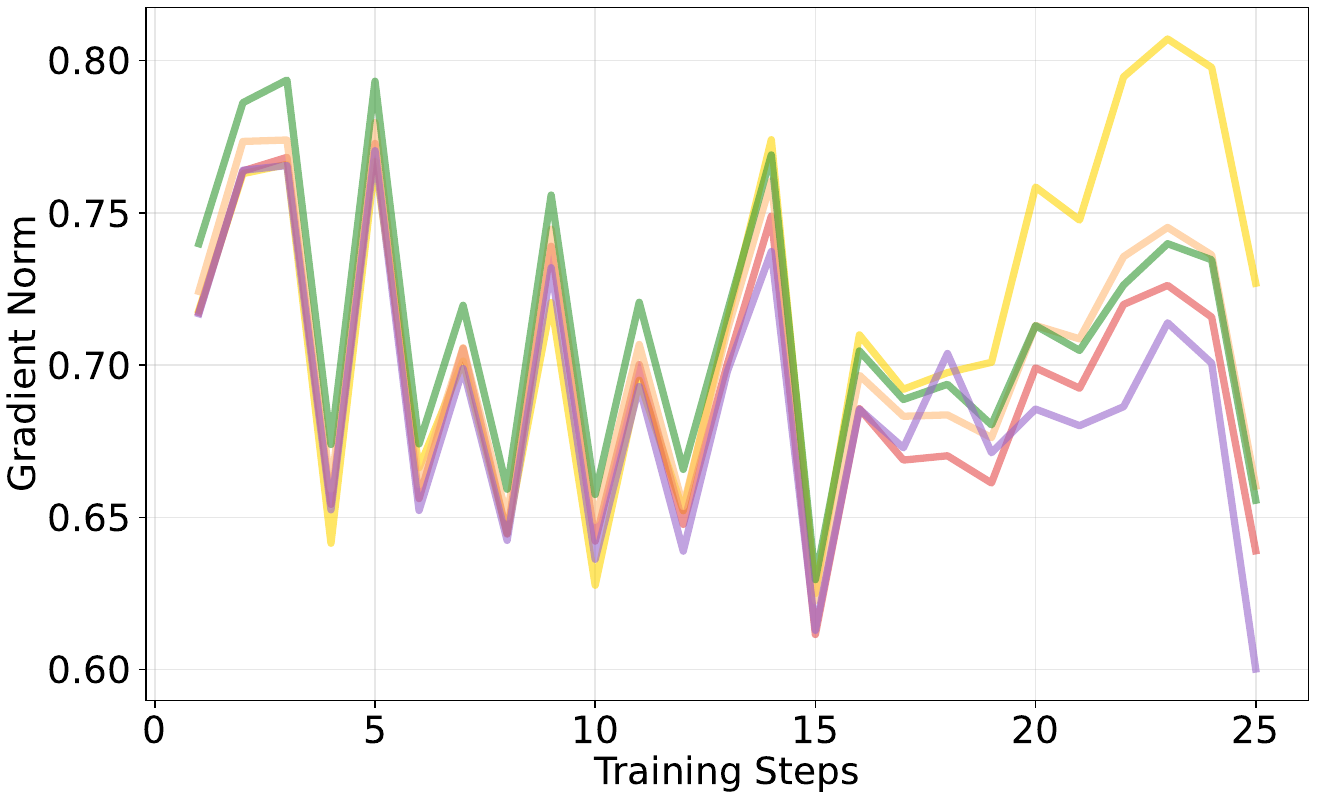}
    \caption{$\lambda = \frac{16}{\sqrt{r\cdot r_2}}$}
    \label{figure:normstable}
\end{subfigure}
\caption{Gradient norm during training with (a) $\lambda \equiv1$ and (b) $\lambda = \frac{16}{\sqrt{r\cdot r_2}}$ under different $r_1$, $r_2$ and $r$. Identical colors indicate identical component configurations. Our method maintains gradient norms at the same scale across different component configurations.}
\end{figure}
where $\alpha$ is a tunable hyperparameter. As shown in Figure~\ref{figure:normstable}, for a fixed $\alpha = 16$, our method maintains gradient norms at the same scale across different component configurations, thereby ensuring training stability and consistency. See Appendix~\ref{sec:more_studies} for more results.

\paragraph{Initialization strategies.}
In the above analysis, we consistently adopt random initialization for component $\mA^{(i)}$ while initializing $\mB^{(i)}$ to zero. To justify this design choice, we investigate the impact of different initialization strategies on CDKA. We fix $r=2$, $r_1=2$, $r_2=8$, and $\lambda \equiv 16$ to ensure fair comparisons. The results are summarized in Table~\ref{tab:init}, where Ku and Kn denote Kaiming uniform and Kaiming normal initialization~\citep{he2015delving}, respectively. We observe that random initialization of $\mA^{(i)}$ consistently outperforms random initialization of $\mB^{(i)}$, confirming the effectiveness of our initialization design.

\section{Experiments}
\begin{table*}[t]
    \centering
    \begin{minipage}[t]{0.48\textwidth}
    \centering
    \caption{Performance of fine-tuning LLaMA-2-7B. Results marked with (*) are obtained from our reimplementation. All other results are sourced from~\citet{he2025gora}.}
    \label{tab:llama2_results}
    \resizebox{0.82\textwidth}{!}{
        \begin{tabular}{lcc}
        \toprule
        \textbf{Method} & \textbf{GSM8k} & \textbf{HumanEval} \\
        \midrule
        Full
        & $59.36_{\pm 0.85}$ & $35.31_{\pm 2.13}$ \\
        \midrule
        LoRA
        & $42.08_{\pm 0.04}$ & $14.76_{\pm 0.17}$\\
        PiSSA
        & $44.54_{\pm 0.27}$ & $16.02_{\pm 0.17}$ \\
        rsLoRA
        & $45.62_{\pm 0.10}$ & $16.01_{\pm 0.79}$ \\
        LoRA+
        & $52.11_{\pm 0.62}$ & $18.17_{\pm 0.52}$ \\
        DoRA
        & $53.07_{\pm 0.75}$ & $19.75_{\pm 0.41}$ \\
        AdaLoRA
        & $50.72_{\pm 1.39}$ & $17.80_{\pm 0.44}$  \\
        GoRA
        & $54.04_{\pm 0.22}$ & $\mathbf{24.80}_{\pm 1.04}$  \\
        LoRA-GA
        & $53.60_{\pm 0.30}$ & $19.81_{\pm 1.46}$ \\
        LoRA-One*
        & $\underline{55.40}_{\pm 0.37}$ & $20.73_{\pm 1.00}$ \\
        KronA* 
        & $49.00 _{\pm 0.41}$ & $17.21_{\pm 2.01}$ \\
        \midrule
        CDKA (Ours)
        & $\mathbf{56.71}_{\pm 0.38}$ & $\underline{24.59}_{\pm 2.74}$ \\
        \bottomrule
        \end{tabular}
    }
    \end{minipage}
    \hfill
    \begin{minipage}[t]{0.45\textwidth}
    \begin{minipage}[t]{\textwidth}
    \centering
    \caption{Performance of fine-tuning LLaMA-3.1-8B. Results marked with (*) are obtained from our reimplementation. All other results are sourced
    from~\citet{he2025gora}.}
    \label{tab:llama3_results}
    \resizebox{0.53\textwidth}{!}{
        \begin{tabular}{lc}
        \toprule
        \textbf{Method} & \textbf{GSM8k} \\
        \midrule
        Full
        & $73.69_{\pm 0.28}$ \\
        \midrule
        LoRA
        & $67.78_{\pm 1.25}$ \\
        GoRA
        & $\underline{72.91}_{\pm 0.76}$   \\
        KronA* 
        & $68.11 _{\pm 0.38}$ \\
        \midrule
        CDKA (Ours)
        & $\mathbf{73.74}_{\pm 0.42}$ \\
        \bottomrule
        \end{tabular}
    }
    \end{minipage}
    
    \vspace{0.4cm}
    
    \begin{minipage}[t]{\textwidth}
    \centering
    % \caption{Computational Costs of CDKA on MetaMathQA.}
    % \label{tab:cost}
    % \resizebox{0.83\textwidth}{!}{
    %     \begin{tabular}{@{}ccccc@{}}
    %         \toprule
    %         Base Model & Method &Time Cost & Memory Cost \\ \midrule
    %         \multirow{2}{*}{LLaMA-2-7B} & LoRA  & $7$h$32$min$29$s &$18125$MB \\
    %         ~ & CDKA  & $7$h$49$min$31$s &$18137$MB  \\
    %         \multirow{2}{*}{LLaMA-3.1-8B} & LoRA  &$5$h$40$min$32$s &$23503$MB \\
    %         ~ & CDKA  & $5$h$48$min$03$s &$23507$MB  \\
    %         \bottomrule
    % \end{tabular}
    % }
    \caption{Performance of fine-tuning Qwen on MetaMathQA. Results with (*) are obtained from our reimplementation.}
    \label{tab:qwen_results}
    \resizebox{0.80\textwidth}{!}{
        \begin{tabular}{@{}ccc@{}}
            \toprule
            \textbf{Method} & \textbf{Qwen-3-0.6B} & \textbf{Qwen-3-8B} \\ 
            \midrule
            LoRA-One*  & $\underline{64.77}_{\pm 0.59}$ & $\underline{85.98}_{\pm 0.32}$ \\
            KronA*     & $60.85_{\pm 0.13}$              & $85.06_{\pm 0.67}$ \\
            CDKA (Ours) & $\mathbf{65.83}_{\pm 0.37}$     & $\mathbf{86.56}_{\pm 0.20}$ \\
            \bottomrule
        \end{tabular}
    }
    \end{minipage}
    \end{minipage}
\end{table*}
In this section, we conduct experiments to evaluate the effectiveness of CDKA across a wide range of NLP and CV tasks. We begin by assessing its Natural Language Understanding (NLU) capability on a subset of GLUE dataset~\citep{wang2018glue}. To evaluate the capability of CDKA in Natural Language Generation (NLG) tasks, we evaluate CDKA on mathematical reasoning and code generation tasks. To further evaluate the effectiveness of CDKA on different modalities, we conduct experiments on seven image classification tasks with the CLIP-ViT-B/16~\citep{radford2021learning} model. All experiments are conducted under the same number of training epochs, with results reported as the mean and standard deviation over three random seeds. Further details on the hyperparameter settings are provided in Appendix~\ref{sec:hyper}.

\paragraph{Baselines.} We compare CDKA against a wide range of standard baselines, including LoRA~\citep{hu2022lora}, PiSSA~\citep{meng2024pissa}, rsLoRA~\cite{kalajdzievski2023rank}, LoRA+~\citep{hayou2024lora+}, DoRA~\citep{liu2024dora}, AdaLoRA~\citep{zhang2023adaptive}, GoRA~\citep{he2025gora}, LoRA-GA~\citep{wang2024lora}, LoRA-One~\citep{zhang2025lora}, LoRA-Pro~\citep{wang2025lorapro} and KronA~\citep{edalati2022krona}. We exclude MoKA~\citep{sadeghi2025moka} and MoKA-MoE~\citep{yu2025moka} since they introduce substantial computational overhead. For the results on NLG tasks, we reproduce LoRA-One to ensure a consistent inference protocol. For KronA, we set $r_1 = r_2 = 64$ and $r = 1$ with $\lambda = 16$ to improve its performance.

\subsection{Experiments on Natural Language Understanding} 
\paragraph{Implementation Details.} 
To evaluate the capability of CDKA on NLU tasks, we follow the common experimental settings adopted in prior works~\citep{wang2024lora,he2025gora,zhang2025lora} and fine-tune the T5-Base model~\citep{raffel2020exploring} on five selected subsets (MNLI, SST-2, CoLA, QNLI, MRPC) in the GLUE benchmark~\citep{wang2018glue}. We report accuracy on the corresponding validation sets. For the component configuration of CDKA, we set $r_1=3$, $r_2=3$ and $r=1$. 
\paragraph{Results.} As shown in Table~\ref{tab:glue_results}, CDKA achieves competitive performance across all five tasks using only $12.5\%$ of the trainable parameters compared to LoRA-One, without introducing additional training overhead. Compared to KronA, CDKA yields an average score improvement of 2.92 percentage points under the same parameter budget. Notably, CDKA requires only $0.18\%$ of the trainable parameters to approach the performance of full fine-tuning, further highlighting the effectiveness of our approach.

\subsection{Experiments on Natural Language Generation}
\paragraph{Implementation Details.}
To evaluate the capability of CDKA in Natural Language Generation (NLG) tasks, we fine-tune the LLaMA-2-7B~\citep{touvron2023llama} model on mathematical reasoning and code generation tasks. For mathematical reasoning, we train the model on a $100$K subset of MetaMathQA~\citep{yu2024metamath} and evaluate on the test set of GSM8k~\citep{cobbe2021training}. Performance is measured using the Exact Match (EM) metric. For code generation, we train the model on a $100$K subset of Code-FeedBack~\citep{zheng2024opencodeinterpreter} and evaluate on HumanEval~\citep{chen2021evaluating}. Performance is measured using the PASS@1 metric. For the component configuration of CDKA, we set $r_1=2$, $r_2=2$, $r=8$ for MetaMathQA and $r_1=2$, $r_2=8$, $r=4$ for Code-FeedBack.
\paragraph{Results.} As shown in Table~\ref{tab:llama2_results}, CDKA demonstrates consistent improvements across large-scale experiments. In particular, CDKA achieves state-of-the-art performance on mathematical reasoning tasks, outperforming LoRA-One by 1.31 percentage points. On code generation tasks, CDKA attains the second-best performance. Compared to KronA, CDKA achieves improvements of 7.71 and 4.94 percentage points respectively, highlighting the effectiveness of our method. To further investigate the adaptability of CDKA across different backbone models, we fine-tune LLaMA-3.1-8B~\citep{grattafiori2024llama}, Qwen-3-0.6B and Qwen-3-8B~\citep{yang2025qwen3} on mathematical reasoning tasks, with the results reported in Table~\ref{tab:llama3_results} and Table~\ref{tab:qwen_results}. Consistent with the results on LLaMA-2-7B, CDKA again achieves state-of-the-art performance, demonstrating the robustness and strong generalization ability of our approach across different backbone models.

\begin{table*}[t]
\centering
\caption{Performance of fine-tuning CLIP-ViT-B/16 model on seven image classification tasks. Results marked with (*) are obtained from our reimplementation. All other results are sourced from \citet{he2025gora}.}
\label{tab:vit_results}
\resizebox{0.98\textwidth}{!}{
\begin{tabular}{lccccccc|c}
\toprule
\textbf{Method} & \textbf{Cars} & \textbf{DTD} & \textbf{EuroSAT} & \textbf{GTSRB} & \textbf{RESISC45} & \textbf{SUN397} & \textbf{SVHN} & \textbf{Average} \\
\midrule
Zero-shot   & $63.75$ & $44.39$ & $42.22$ & $35.22$ & $56.46$ & $62.56$ & $15.53$ & $45.73$ \\
Full     & $84.23_{\pm 0.06}$ & $77.44_{\pm 0.19}$ & $\underline{98.09}_{\pm 0.03}$ & $94.31_{\pm 0.28}$ & $93.95_{\pm 0.00}$ & $75.35_{\pm 0.10}$ & $93.04_{\pm 0.18}$ & $88.06$ \\
\midrule
LoRA        & $72.81_{\pm 0.13}$ & $73.92_{\pm 0.38}$ & $96.93_{\pm 0.07}$ & $92.40_{\pm 0.16}$ & $90.03_{\pm 0.14}$ & $70.12_{\pm 0.18}$ & $88.02_{\pm 0.07}$ & $83.46$ \\
rsLoRA      & $82.38_{\pm 0.20}$ & $78.03_{\pm 0.76}$ & $98.06_{\pm 0.08}$ & $95.04_{\pm 0.11}$ & $93.96_{\pm 0.18}$ & $75.38_{\pm 0.24}$ & $92.74_{\pm 0.18}$ & $87.94$ \\
LoRA+       & $72.87_{\pm 0.18}$ & $74.07_{\pm 0.45}$ & $97.18_{\pm 0.07}$ & $92.40_{\pm 0.17}$ & $90.23_{\pm 0.18}$ & $70.17_{\pm 0.15}$ & $88.08_{\pm 0.05}$ & $83.57$ \\
DoRA        & $73.72_{\pm 0.06}$ & $73.72_{\pm 0.33}$ & $96.95_{\pm 0.01}$ & $92.38_{\pm 0.08}$ & $90.32_{\pm 0.08}$ & $70.20_{\pm 0.16}$ & $88.23_{\pm 0.05}$ & $83.48$ \\
LoRA-GA     & $\underline{85.18}_{\pm 0.41}$ & $77.50_{\pm 0.12}$ & $98.05_{\pm 0.27}$ & $95.28_{\pm 0.10}$ & $94.33_{\pm 0.19}$ & $75.44_{\pm 0.06}$ & $93.68_{\pm 0.35}$ & $88.51$ \\
LoRA-Pro    & $\mathbf{85.87}_{\pm 0.08}$ & $\mathbf{78.64}_{\pm 0.25}$ & $98.46_{\pm 0.03}$ & $95.66_{\pm 0.05}$ & $94.75_{\pm 0.21}$ & $\underline{76.42}_{\pm 0.14}$ & $94.63_{\pm 0.20}$ & $89.20$ \\
LoRA-One* & $82.75_{\pm 0.23}$ & $78.42_{\pm 0.35}$ & $\underline{99.03}_{\pm 0.05}$ & $\underline{98.55}_{\pm 0.19}$ & $\underline{95.58}_{\pm 0.11}$ & $75.38_{\pm 0.16}$ & $\underline{97.26}_{\pm 0.06}$ & $\underline{89.57}$ \\
KronA* & $74.71_{\pm 0.12}$ & $63.95_{\pm 0.78}$ & $98.38_{\pm 0.06}$ & $96.20_{\pm 0.15}$ & $92.60_{\pm 0.14}$ & $71.98_{\pm 0.07}$ & $96.78_{\pm 0.00}$ & $84.94$\\
\midrule
CDKA (Ours) & $84.35_{\pm 0.14}$ & $\underline{78.53}_{\pm 0.15}$ & $\mathbf{99.14}_{\pm 0.04}$ & $\mathbf{98.65}_{\pm 0.19}$ & $\mathbf{96.00}_{\pm 0.16}$ & $\mathbf{76.42}_{\pm 0.02}$ & $\mathbf{97.48}_{\pm 0.00}$ & $\mathbf{90.08}$ \\
\bottomrule
\end{tabular}
}
\end{table*}
\subsection{Experiments on Image Classification}
\paragraph{Implementation Details.}
To further evaluate the effectiveness of CDKA across different modalities, we conduct experiments on image classification tasks. Specifically, we fine-tune CLIP-ViT-B/16~\citep{radford2021learning} on seven datasets, including Stanford Cars~\citep{krause20133d}, DTD~\citep{cimpoi2014describing}, EuroSAT~\citep{helber2019eurosat}, GTSRB~\citep{houben2013detection}, RESISC45~\citep{cheng2017remote}, SUN397~\citep{xiao2010sun}, and SVHN~\citep{netzer2011reading}, and report the corresponding test accuracy. The classifier is constructed using prompts like ``a photo of a {class}''. We set $r_1=2$, $r_2=16$, and $r=2$ for CDKA.

% For the component configuration of CDKA, we set $r_1=2$, $r_2=16$, and $r=2$.

\paragraph{Results.} As shown in Table~\ref{tab:vit_results}, CDKA achieves consistent improvements across all tasks and state-of-the-art average accuracy. Notably, CDKA improves the average score by 5.14 percentage points over KronA and 0.51 percentage points over LoRA-One, demonstrating the strong generalization ability of our approach across different modalities.

\subsection{Robustness of CDKA}
\begin{table}[t]
\centering
\caption{Robustness of CDKA under fixed component configurations. For CLIP-ViT-B/16, we report the average performance across seven datasets.}
\label{tab:robust_results}
\resizebox{0.95\linewidth}{!}{
    \begin{tabular}{lcccc}
    \toprule
    \textbf{Method} & \textbf{LLaMA-2-7B} & \textbf{Qwen-3-0.6B} & \textbf{Qwen-3-8B} & \textbf{CLIP-ViT-B/16} \\
    \midrule
    LoRA-One & $\underline{55.40}_{\pm 0.37}$ & $\underline{64.77}_{\pm 0.59}$ & $\underline{85.98}_{\pm 0.32}$ & $\underline{89.57}$ \\
    KronA & $49.00_{\pm 0.41}$     & $60.85_{\pm 0.13}$ & $85.06_{\pm 0.67}$ & $84.94$\\
    CDKA (Ours) & $\mathbf{56.48}_{\pm 0.12}$     & $\mathbf{65.23}_{\pm 0.19}$ & $\mathbf{86.35}_{\pm 0.10}$ & $\mathbf{90.00}$\\
    \bottomrule
    \end{tabular}
}
\end{table}

To examine the robustness of CDKA, we conduct experiments using a ``default'' component configuration with $r_1=2$, $r_2=8$, and $r=4$. This configuration follows our proposed design principles with a small $r_1$, a large $r_2$, and a moderate $r$. We evaluate it under diverse settings, including mathematical reasoning with LLaMA-2-7B, Qwen-3-0.6B, and Qwen-3-8B, as well as image classification with CLIP-ViT-B/16. As shown in Table~\ref{tab:robust_results}, although alternative configurations may yield marginal improvements in specific cases, this simple ``default'' configuration consistently outperforms LoRA-One, the state-of-the-art LoRA variant, and KronA. These results demonstrate the robustness of our design guidelines and the adaptability of CDKA across models and modalities.

% \subsection{Ablation Studies}
% \paragraph{Robustness of CDKA}
% \paragraph{Scaling Factor} To examine the performance gains of the stabilization scaling factor $\lambda$ versus the component design, we conduct The detailed results are presented in Table xxx. It can be observed that the performance gains of CDKA primarily stem from our component design, while the stabilization scaling factor further improves the performance under different component configurations. These results fully demonstrate the effectiveness of our method.

\subsection{Computational Costs}
\begin{table}[t]
\centering
\caption{Computational Costs of CDKA on MetaMathQA.}
\label{tab:cost}
\resizebox{0.8\linewidth}{!}{
    \begin{tabular}{@{}ccccc@{}}
        \toprule
        \textbf{Base Model} & \textbf{Method} & \textbf{Time Cost} & \textbf{Memory Cost} \\ \midrule
        \multirow{2}{*}{LLaMA-2-7B} & LoRA  & $7$h$32$min$29$s &$18125$MB \\
        ~ & CDKA  & $7$h$49$min$31$s &$18137$MB  \\
        \multirow{2}{*}{LLaMA-3.1-8B} & LoRA  &$5$h$40$min$32$s &$23503$MB \\
        ~ & CDKA  & $5$h$48$min$03$s &$23507$MB  \\
        \bottomrule
\end{tabular}
}
\end{table}
To avoid explicitly computing the Kronecker product $\mB^{(i)} \otimes \mA^{(i)}$, we adopt an equivalent reformulation following previous work~\citep{edalati2022krona}, which is expressed by:
\begin{equation}
    (\mB^{(i)} \otimes \mA^{(i)})\vx = \operatorname{vec} \big(\mA^{(i)} \mX (\mB^{(i)})^\top\big),
\end{equation}
where $\mX \in \sR^{\frac{d_{\text{in}}}{r_2} \times r_2}$ is obtained by reshaping the input vector $\vx$. This reformulation substantially reduces computational overhead. To verify the efficiency of CDKA, we evaluate its computational cost on a single RTX 4090 GPU. As shown in Table~\ref{tab:cost}, the training time and memory consumption of CDKA are nearly identical to those of standard LoRA under the same trainable parameters, making CDKA computationally efficient in practice.

\section{Conclusion and Limitations}
In this paper, we perform a fine-grained analysis of how the dimensions and number of Kronecker components influence the performance of Kronecker adapters. We propose CDKA and provide practical guidelines for deployment. Experiments across diverse NLP and CV tasks demonstrate the effectiveness of CDKA. For simplicity, our theoretical analysis mainly focuses on linear settings, leaving a deeper nonlinear analysis of Kronecker adapters for future work.

\section*{Impact Statement}
This paper provides a detailed analysis of Kronecker adapters and derives principles for improving their performance. The target of this paper is to advance the field of Machine Learning. There might be some potential societal consequences of our work, none of which we feel must be specifically highlighted here.

\section*{Acknowledgements}
Z. Ling and Z. Liao would like to acknowledge the National Key Research and Development Program of China (No. 2025YFA1018600).
Z. Ling would also like to acknowledge the National Natural Science Foundation of China (NSFC-62406119) and the Natural Science Foundation of Hubei Province (2024AFB074). 
Z. Liao would also like to acknowledge the National Natural Science Foundation of China (NSFC-12571561)  and  the Fundamental Research Support Program of HUST (2025BRSXB0004). 
F. Zhou would like to acknowledge the National Natural Science Foundation of China (NSFC-62576346), the MOE Project of Key Research Institute of Humanities and Social Sciences (22JJD110001), the Fundamental Research Funds for the Central Universities, the Research Funds of Renmin University of China (24XNKJ13), and the Beijing Advanced Innovation Center for Future Blockchain and Privacy Computing. 
R. C. Qiu would like to acknowledge the National Natural Science Foundation of China (NSFC-12141107), the Key Research and Development Program of Wuhan (2024050702030100), and the Key Research and Development Program of Guangxi (GuiKe-AB21196034).

\bibliography{example_paper}
\bibliographystyle{icml2026}

%%%%%%%%%%%%%%%%%%%%%%%%%%%%%%%%%%%%%%%%%%%%%%%%%%%%%%%%%%%%%%%%%%%%%%%%%%%%%%%
%%%%%%%%%%%%%%%%%%%%%%%%%%%%%%%%%%%%%%%%%%%%%%%%%%%%%%%%%%%%%%%%%%%%%%%%%%%%%%%
% APPENDIX
%%%%%%%%%%%%%%%%%%%%%%%%%%%%%%%%%%%%%%%%%%%%%%%%%%%%%%%%%%%%%%%%%%%%%%%%%%%%%%%
%%%%%%%%%%%%%%%%%%%%%%%%%%%%%%%%%%%%%%%%%%%%%%%%%%%%%%%%%%%%%%%%%%%%%%%%%%%%%%%
\newpage
\appendix
\onecolumn
\section{Experimental Details}
\label{sec:hyper}
We fine-tune all linear layers except for the language head for T5-Base, LLaMA-2-7B, Qwen-3-0.6B and Qwen-3-8B. For LLaMA-3.1-8B, we fine-tune all linear layers in the attention modules. For CLIP-ViT-B/16, we fine-tune all linear layers in the visual backbone. Other hyperparameters for the experiments are summarized in Table~\ref{tab:hyper}. 
\begin{table}[ht]
\centering
\caption{Hyperparameters for CDKA on different models.}
\label{tab:hyper}
\begin{tabular}{lccccccc}
\toprule
Model & LR  & Warmup & Optimizer & Betas & Weight Decay & Batch Size & $\alpha$ \\
\midrule
T5-Base           & $2\mathrm{e}{-3}$  & 0.03 & AdamW  & (0.9, 0.999) & 0       & 32  & 16\\
LLaMA-2-7B    & $2\mathrm{e}{-4}$  & 0.03 & AdamW & (0.9, 0.999) & 0       & 32 & 64\\
LLaMA-3.1-8B & $2\mathrm{e}{-4}$  & 0.03 & AdamW & (0.9, 0.999) & $5\mathrm{e}{-4}$ & 64 &64\\
Qwen-3-0.6B & $2\mathrm{e}{-4}$  & 0.03 & AdamW & (0.9, 0.999) & 0 & 32 &32\\
Qwen-3-8B & $2\mathrm{e}{-4}$  & 0.03 & AdamW & (0.9, 0.999) & 0 & 32 &32\\
CLIP-ViT-B/16 & $1\mathrm{e}{-4}$  & 0.03 & AdamW & (0.9, 0.999) & 0.01 & 64 &32\\
\bottomrule
\end{tabular}
\end{table}
% \subsection{Hyperparameters for T5-Base Model.}
\section{The Alignment of $\widetilde{\mA_t}$}
\label{sec:proofa}
\begin{theorem}[The top-$r^*$ left singular subspace alignment between $\widetilde{\mA_t}$ and $\widetilde{\mG}_0$.]~\label{theorem:alignmentA}
    Under the settings described in Section~\ref{sec:alignment}, we consider random Gaussian initialization for $\widetilde{\mA}_{0}$ with $[\widetilde{\mA}_{0}]_{ij} \sim \mathcal{N}(0, \alpha^2)$ and zero initialization for $\widetilde{\mB}_{0}$ with $[\widetilde{\mB}_{0}]_{ij} = 0$ and
    \[
    \alpha \leq
    \begin{cases}
    \left( \dfrac{\theta \xi \sqrt{r_2}}{24 r \sqrt{r_1d_{\text{in}}}} \right)^{\frac{3\kappa}{2}}
    \sqrt{\dfrac{\sigma_1(\widetilde{\mG}_0)r_2}{94.5 \sqrt{r}r_1d_{\text{in}}}},
    & \text{if } r^{*} \le r < 2r^{*}, \\
    \left( \dfrac{\theta\sqrt{r_2}}{24  \sqrt{r_1d_{\text{in}}}} \right)^{\frac{3\kappa}{2}}
    \sqrt{\dfrac{\sigma_1(\widetilde{\mG}_0)r_2}{94.5 \sqrt{r}r_1d_{\text{in}}}},
    & \text{if } r \ge 2r^{*}.
    \end{cases}
    \]
    where $\kappa$ is the condition number of $\widetilde{\mG}_0$. Then if we run gradient descent for $t^*_A$ steps on the Kronecker adapter with:
    \[
    t^*_A \lesssim
    \begin{cases}
    \dfrac{\ln\!\left( \frac{24 r \sqrt{r_1d_{\text{in}}}}{\theta \xi \sqrt{r2}} \right)}
    {\ln\!\left( 1 + \eta \sigma_{r^*}(\widetilde{\mG}_0) \right)},
    & \text{if } r^* \le r < 2r^*, \\
    \dfrac{\ln\!\left( \frac{24 \sqrt{r_1d_{\text{in}}}}{\theta\sqrt{r2}} \right)}
    {\ln\!\left( 1 + \eta \sigma_{r^*}(\widetilde{\mG}_0) \right)},
    & \text{if } r \ge 2r^*,
    \end{cases}
    \]
    we have the following alignment for $\forall \theta \in (0,1)$:
    \begin{equation}
        \| \mU_{r^*,\perp}^\top(\widetilde{\mG}_0) \mU_{r^*}(\widetilde{\mA}_{t^*_A}) \|_2 \leq \theta,
    \end{equation}
    with probability at least
    \[
    \begin{cases}
    1 - C_1 \exp(-d_{\text{in}}\frac{r_1}{r_2}) - (C_2 \xi)^{\,r - r^* + 1}
    - C_3 \exp(-r) - C \exp(-N),
    & \text{if } r^* \le r < 2r^*, \\
    1 - C_4 \exp(-d_{\text{in}}\frac{r_1}{r_2}) - C_5 \exp(-r) - C \exp(-N),
    & \text{if } r \ge 2r^* .
    \end{cases}
    \]
    for some universal constants $C$, $C_1$, $C_2$, $C_3$, $C_4$, $C_5$. 
\end{theorem}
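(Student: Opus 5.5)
The plan is to reduce the Kronecker-adapter dynamics to LoRA dynamics on the rearranged matrix and then invoke the subspace-alignment argument of \citet{zhang2025lora}. The key observation is that $\operatorname{Kreshape}$ is a linear isometry with respect to the Frobenius inner product. It maps $\mB\otimes\mA$ to the rank-one matrix $\operatorname{vec}(\mA)\operatorname{vec}(\mB)^\top$. Hence
\[
\operatorname{Kreshape}\Bigl(\sum_i \mB^{(i)}\otimes\mA^{(i)}\Bigr)=\widetilde{\mA}\widetilde{\mB}^\top,
\]
and the gradients satisfy $\nabla_{\widetilde{\mA}}\mathcal{L}_{\text{KA}} = -\widetilde{\mG}\widetilde{\mB}$ and $\nabla_{\widetilde{\mB}}\mathcal{L}_{\text{KA}} = -\widetilde{\mG}^\top\widetilde{\mA}$. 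Here $\widetilde{\mG}_t=\operatorname{Kreshape}(\tfrac1N(\mY-\mW_t\mX)\mX^\top)$. Thus gradient descent in \Eqref{equ:gradka} is exactly LoRA-type gradient descent on a factorization $\widetilde{\mA}\widetilde{\mB}^\top$ of a matrix of size $r_1 d_{\text{in}}/r_2 \times d_{\text{out}} r_2/r_1$. The only difference is that the "data" enters through $\operatorname{Kreshape}$ of the residual times $\mX^\top$.

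The steps are as follows.

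First, I will write the linearized dynamics in the small-initialization phase. Since $\widetilde{\mB}_0=0$ and $\alpha$ is tiny, $\widetilde{\mA}_t\widetilde{\mB}_t^\top$ stays small for $t\le t^*$, so $\widetilde{\mG}_t\approx\widetilde{\mG}_0$. Writing $\widetilde{\mG}_t=\widetilde{\mG}_0+\mE_t$, I obtain the coupled recursion
\[
\widetilde{\mA}_{t+1}=\widetilde{\mA}_t+\eta\widetilde{\mG}_t\widetilde{\mB}_t,\qquad
\widetilde{\mB}_{t+1}=\widetilde{\mB}_t+\eta\widetilde{\mG}_t^\top\widetilde{\mA}_t.
\]
Its leading part is the power-iteration-like map whose $\widetilde{\mA}$-block equals $(\mI+\eta^2\widetilde{\mG}_0\widetilde{\mG}_0^\top)$ acting on even steps. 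Equivalently, the symmetric dilation $\begin{pmatrix}0&\widetilde{\mG}_0\\ \widetilde{\mG}_0^\top&0\end{pmatrix}$ grows components in $\mU_{r^*}(\widetilde{\mG}_0)$ at rate $(1+\eta\sigma_{r^*})^t$, while the $\mU_{r^*,\perp}$ components stay $O(1)$ times the initial scale plus perturbation. The error $\mE_t$ comes from two sources. One is the nonlinearity $\widetilde{\mA}_t\widetilde{\mB}_t^\top$, which is bounded by the growing-scale term $\alpha^2(1+\eta\sigma_1)^{2t}$ times $\|\mX\mX^\top/N\|$. The other is the deviation of $\mX\mX^\top/N$ from $\mI$, controlled with probability $1-C\exp(-N)$ by sub-Gaussian concentration.

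Second, I will lower bound the signal. The projection $\mU_{r^*}^\top(\widetilde{\mG}_0)\widetilde{\mA}_0$ is an $r^*\times r$ Gaussian matrix with entries of variance $\alpha^2$. Its smallest singular value is at least $\alpha\xi(\sqrt r-\sqrt{r^*-1})$ with probability $1-(C_2\xi)^{r-r^*+1}-C_3e^{-r}$ (Rudelson–Vershynin). When $r\ge2r^*$, this becomes $\gtrsim\alpha\sqrt r$ without $\xi$, which explains the second case. Meanwhile $\|\widetilde{\mA}_0\|_2\lesssim\alpha\sqrt{r_1d_{\text{in}}/r_2}$ with probability $1-C_1\exp(-d_{\text{in}}r_1/r_2)$, since the row dimension is $r_1d_{\text{in}}/r_2$. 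The ratio of these two bounds produces exactly the quantity $24r\sqrt{r_1d_{\text{in}}}/(\theta\xi\sqrt{r_2})$ appearing in $t^*$.

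Third, I will apply a Wedin/Davis–Kahan-type argument. I write $\widetilde{\mA}_t=\mU_{r^*}\mU_{r^*}^\top\widetilde{\mA}_t+\mU_{r^*,\perp}\mU_{r^*,\perp}^\top\widetilde{\mA}_t$, which gives
\[
\|\mU_{r^*,\perp}^\top\mU_{r^*}(\widetilde{\mA}_t)\|\le \frac{\|\mU_{r^*,\perp}^\top\widetilde{\mA}_t\|}{\sigma_{r^*}(\mU_{r^*}^\top\widetilde{\mA}_t)}.
\]
The denominator grows like $(1+\eta\sigma_{r^*})^t$ times the initial smallest singular value. Choosing $t^*$ as stated makes the ratio at most $\theta/2$. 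The condition on $\alpha$ (the exponent $3\kappa/2$ arises from $(1+\eta\sigma_1)^{t^*}\le(\cdot)^{\kappa}$-type conversions) then ensures that the accumulated perturbation $\sum_{s<t^*}\|\mE_s\|$ adds at most another $\theta/2$.

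I expect the main obstacle to be this last perturbation bookkeeping. Specifically, I must show that the nonlinear residual $\widetilde{\mA}_t\widetilde{\mB}_t^\top$ remains negligible up to $t^*$ while the signal has grown by the factor $1/\theta$, and track the constants (24, 94.5) through $\operatorname{Kreshape}$ of $\Delta\mW(\mX\mX^\top/N-\mI)$. For this I plan to mirror \citet{zhang2025lora}'s induction verbatim, substituting the dimensions $r_1d_{\text{in}}/r_2$ for $d_{\text{in}}$.
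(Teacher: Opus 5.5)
Your proposal is correct and follows essentially the paper's route. The $\operatorname{Kreshape}$ rearrangement gives the LoRA-type recursion $\mZ_{t+1}=\mH\mZ_t-\hat{\mE}_{t+1}$, an induction keeps the deviation from the linearized dynamics below $\|\widetilde{\mA}_0\|_2$ for $t\le t^{lin}$, and the Gaussian bounds $\|\widetilde{\mA}_0\|_2\le 3\alpha\sqrt{r_1d_{\text{in}}/r_2}$ and $\sigma_{\min}(\mU_{r^*}^\top(\widetilde{\mG}_0)\widetilde{\mA}_0)\gtrsim\alpha\xi/r$ (resp.\ $\gtrsim\alpha$), combined with $t^*_A\le t^{lin}$ and $\ln(1+\eta\sigma_1)\le\kappa\ln(1+\eta\sigma_{r^*})$, give the stated $\alpha$ condition with $94.5=9\cdot 10.5$; your direct bound $\|\mU_{r^*,\perp}^\top\mU_{r^*}(\widetilde{\mA}_t)\|_2\le\|\mU_{r^*,\perp}^\top\widetilde{\mA}_t\|_2/\sigma_{r^*}(\mU_{r^*}^\top\widetilde{\mA}_t)$ is a valid, slightly cleaner substitute for the paper's Wedin-type lemma. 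The one step you cannot copy verbatim from \citet{zhang2025lora} is the cubic error term, because $\operatorname{Kreshape}$ does not commute with right-multiplication by $\frac1N\mX\mX^\top$, so the paper instead uses $\|\operatorname{Kreshape}(\mK)\|_2\le\|\mK\|_F$ and $\|\widetilde{\mA}_t\widetilde{\mB}_t^\top\|_F\le\sqrt r\,\|\widetilde{\mA}_t\|_2\|\widetilde{\mB}_t\|_2$, which is exactly where the $\sqrt r$ in the $94.5\sqrt r$ factor comes from.
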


\begin{proof}
We divide the proof into three parts. First, we derive the dynamics of the Kronecker components $\widetilde{\mA}_t$ and $\widetilde{\mB}_t$. Next, we establish an error bound between the linearized dynamics of the Kronecker components and the original dynamics. Finally, we combine these results to characterize the alignment between $\widetilde{\mA}_t$ and $\widetilde{\mG}_0$.

\textbf{Part 1: The dynamic of Kronecker components}

Under the settings described in Section~\ref{sec:alignment}, we first derive the update rules of $\widetilde{\mA}_t$ and $\widetilde{\mB}_t$ under gradient descent. Combining~\Eqref{equ:losska} and~\Eqref{equ:gradka}, the iteration of a single component $\mA_t^{(i)}$ and $\mB_t^{(i)}$ can be expressed as follows:
\begin{equation}
\begin{aligned}
\operatorname{vec}(\mA_{t+1}^{(i)}) &= \operatorname{vec}(\mA_t^{(i)}) - \eta  \operatorname{Kreshape}\big(\frac{1}{N}\big((\mW_0 +  \sum_{i=1}^{r}\mB^{(i)} \otimes \mA^{(i)} )\mX - \mY)\big)\mX^\top\big) \operatorname{vec}(\mB_t^{(i)}), \\
\operatorname{vec}(\mB_{t+1}^{(i)}) &= \operatorname{vec}(\mB_t^{(i)}) - \eta  \operatorname{Kreshape}^\top\big(\frac{1}{N}\big((\mW_0 +  \sum_{i=1}^{r}\mB^{(i)}_t \otimes \mA^{(i)}_t )\mX - \mY)\big)\mX^\top\big) \operatorname{vec}(\mA_t^{(i)}).
\end{aligned}
\end{equation}            
As a result, we can derive the following iteration for $\widetilde{\mA_t} = [\operatorname{vec}(\mA_t^1), \cdots, \operatorname{vec}(\mA_t^r)]$ and $\widetilde{\mB_t} = [\operatorname{vec}(\mB_t^1), \cdots, \operatorname{vec}(\mB_t^r)]$:
\begin{equation}
    \begin{aligned}
    \widetilde{\mA}_{t+1} &= \widetilde{\mA}_t + \eta  \widetilde{\mG}_0\widetilde{\mB}_t - \eta  \operatorname{Kreshape}\big(\frac{1}{N}\sum_{i=1}^{r}\mB^{(i)}_t \otimes \mA^{(i)}_t \mX\mX^\top\big)\widetilde{\mB}_t, \\
    \widetilde{\mB}_{t+1} &= \widetilde{\mB}_t + \eta  \widetilde{\mG}_0^\top \widetilde{\mA}_t - \eta \operatorname{Kreshape}^\top\big(\frac{1}{N}\sum_{i=1}^{r}\mB^{(i)}_t \otimes \mA^{(i)}_t \mX\mX^\top\big)\widetilde{\mA}_t,
    \end{aligned}
\end{equation}
since we have $\widetilde{\mG}_0 = \operatorname{Kreshape}(\mG_0) =\operatorname{Kreshape}\big(\frac{1}{N}(\mY - \mW_0\mX)\mX^\top\big)$ according to~\Eqref{equ:gradfull}.
    
Let
$    \mZ_t = 
    \begin{bmatrix}
    \widetilde{\mA}_t \\
    \widetilde{\mB}_t
    \end{bmatrix}$, 
then we can rewrite the above iteration to:

\begin{equation}~\label{equ:dynamic}
    \mZ_{t+1} = \mH \mZ_t - \hat{\mE}_{t+1},
\end{equation}
where
\begin{equation}
    \mH = \begin{bmatrix}
    \mI_{\frac{r_1}{r_2}d_{\text{in}}} & \eta \widetilde{\mG}_0 \\
    \eta \widetilde{\mG}_0^\top & \mI_{\frac{r_2}{r_1}d_{\text{out}}}
    \end{bmatrix},
\end{equation}
denotes the time-independent linear part and 
\begin{equation}
    \hat{\mE}_{t+1} = \eta  \begin{bmatrix}
        0 & \operatorname{Kreshape}\big(\frac{1}{N}\sum_{i=1}^{r}\mB^{(i)}_t \otimes \mA^{(i)}_t \mX\mX^\top\big) \\
        \operatorname{Kreshape}^\top\big(\frac{1}{N}\sum_{i=1}^{r}\mB^{(i)}_t \otimes \mA^{(i)}_t \mX\mX^\top\big) & 0
    \end{bmatrix} \begin{bmatrix}
    \widetilde{\mA}_t \\
    \widetilde{\mB}_t
    \end{bmatrix},
\end{equation}
represents the nonlinear component of the update.

\textbf{Part 2: The error bound of the linear approximation}

We focus on the dynamics of the linear part in~\Eqref{equ:dynamic}, namely, 
\begin{equation}~\label{equ:dynamiclinear}
    \mZ^{lin}_{t+1} = \mH \mZ^{lin}_t,
\end{equation}
since it is highly correlated with $\widetilde{\mG}_0$. We present the closed-form characterization of the linear dynamics of $\mZ^{lin}_t$, as stated in Lemma~\ref{lemma:dynamiclinear}.
\begin{lemma}[Linear dynamic of the Kronecker adapter. Adapted from Lemma C.5 in~\citet{zhang2025lora}.]~\label{lemma:dynamiclinear}
    Under the setting described in Section~\ref{sec:alignment}, the dynamic of $\mZ^{lin}_t$ in~\Eqref{equ:dynamiclinear} is given by
   \begin{equation}
   \begin{aligned}
        \widetilde{\mA}_{t}^{lin} &= \mP_t^\mA \widetilde{\mA}_{0}= \frac{1}{2} \widetilde{\mU} \big( (\mI_{\frac{r_1}{r_2}d_{\text{in}}} + \eta \widetilde{\mS})^t + (\mI_{\frac{r_1}{r_2}d_{\text{in}}} - \eta \widetilde{\mS})^t \big) \widetilde{\mU}^\top \widetilde{\mA}_{0}, \\
       \widetilde{\mB}_{t}^{lin} &= \mP_t^\mB \widetilde{\mA}_{0}= \frac{1}{2} \widetilde{\mV} \big( (\mI_{\frac{r_1}{r_2}d_{\text{in}}} + \eta \widetilde{\mS})^t - (\mI_{\frac{r_1}{r_2}d_{\text{in}}} - \eta \widetilde{\mS})^t \big) \widetilde{\mU}^\top \widetilde{\mA}_{0},
   \end{aligned}
   \end{equation}
   where $\widetilde{\mG}_0 = \widetilde{\mU} \widetilde{\mS} \widetilde{\mV}$ is the SVD of $\widetilde{\mG}_0$.
\end{lemma} 

We next derive the error between the true iteration $\mZ_t$ and its linear approximation$\mZ^{lin}_t$, which is given by
\begin{equation}~\label{equ:error}
    \mE_t = \mZ_t - \mZ_t^{lin} = - \sum_{i=1}^t \mH^{t-i} \hat{\mE}_{i}.
\end{equation}
We formalize this result in Theorem~\ref{theorem:linearerror}.

\begin{theorem}[The error bound of the linear approximation]~\label{theorem:linearerror}
    Under the setting described in Section~\ref{sec:alignment}, consider the following time period
    \begin{equation}
    t \leq t^{lin} = \frac{\ln\Big(\frac{\sigma_1(\widetilde{\mG}_0)}{10.5 \sqrt{r}\|\widetilde{\mA}_{0}\|_2^2}\Big)}{3\ln\Big(1 + \eta \sigma_1(\widetilde{\mG}_0)\Big)}.
    \end{equation}
    Then the error in~\Eqref{equ:error} is controlled by
    \begin{equation}~\label{equ:errorbound}
        \|\mE_t\|_2 \leq \|\widetilde{\mA}_{0}\|_2,
    \end{equation}
    with probability at least $1- 2C\exp(-N)$ for a universal constant $C$.
\end{theorem}

\begin{proof}
    We prove this by induction. For $t=0$, we have
    \[
        \|\mE_0\|_2 = 0 \leq \|\widetilde{\mA}_{0}\|_2.
    \]
   For $t\geq1$, we assume~\Eqref{equ:errorbound} holds for $t-1$. Through Lemma~\ref{lemma:dynamiclinear}, we can derive the following bound for $\|\widetilde{\mA}_{t-1} \|_2$:
   \begin{align*}
       \|\widetilde{\mA}_{t-1} \|_2 &\leq \|\widetilde{\mA}_{t-1}^{lin}\|_2 + \|\mE_{t-1} \|_2 \\
                                    &\leq (1 + \eta \sigma_1(\widetilde{\mG}_0))^{t-1} \|\widetilde{\mA}_{0}\|_2 + \|\mE_{t-1} \|_2.
   \end{align*}
   Similarly, $\|\widetilde{\mB}_{t-1} \|_2$ is bounded by
   \begin{align*}
       \|\widetilde{\mB}_{t-1} \|_2 &\leq \|\widetilde{\mB}_{t-1}^{lin}\|_2 + \|\mE_{t-1} \|_2 \\
                                    &\leq \frac{1}{2}(1 + \eta \sigma_1(\widetilde{\mG}_0))^{t-1} \|\widetilde{\mA}_{0}\|_2 + \|\mE_{t-1} \|_2.
   \end{align*}
    Then with probability at least $1- 2C\exp(-N\epsilon^2)$ for a universal constant $C$, we have:
    \begin{align*}
        \|\hat{\mE}_t\|_2 & \leq \eta \| \operatorname{Kreshape}\big(\frac{1}{N}\sum_{i=1}^{r}\mB^{(i)}_{t-1} \otimes \mA^{(i)}_{t-1} \mX\mX^\top\big) \widetilde{\mB}_{t-1} \|_2 + \eta  \| \operatorname{Kreshape}^\top\big(\frac{1}{N}\sum_{i=1}^{r}\mB^{(i)}_{t-1} \otimes \mA^{(i)}_{t-1} \mX\mX^\top\big) \widetilde{\mA}_{t-1} \|_2 \\
                    & \leq \eta  \| \frac{1}{N}\sum_{i=1}^{r}\mB^{(i)}_{t-1} \otimes \mA^{(i)}_{t-1} \mX\mX^\top \|_F \big(\|\widetilde{\mA}_{t-1} \|_2 + \|\widetilde{\mB}_{t-1} \|_2 \big) \\
                    & \leq \eta  \| \frac{1}{N} \mX\mX^\top \|_2 \| \sum_{i=1}^{r}\mB^{(i)}_{t-1} \otimes \mA^{(i)}_{t-1}\|_F \big(\|\widetilde{\mA}_{t-1} \|_2 + \|\widetilde{\mB}_{t-1} \|_2 \big) \\
                    & \leq (1+\epsilon)\eta  \sqrt{r}\|\widetilde{\mA}_{t-1} \|_2 \|\widetilde{\mB}_{t-1} \|_2 \big(\|\widetilde{\mA}_{t-1} \|_2 + \|\widetilde{\mB}_{t-1} \|_2 \big) \\
                    & \leq (1+\epsilon)\eta  \sqrt{r}\big( (1 + \eta \sigma_1(\widetilde{\mG}_0))^{t-1} \|\widetilde{\mA}_{0}\|_2 + \|\mE_{t-1} \|_2 \big) \big( \frac{1}{2}(1 + \eta \sigma_1(\widetilde{\mG}_0))^{t-1} \|\widetilde{\mA}_{0}\|_2 + \|\mE_{t-1} \|_2 \big) \\ 
                    & \big( \frac{3}{2}(1 + \eta \sigma_1(\widetilde{\mG}_0))^{t-1} \|\widetilde{\mA}_{0}\|_2 + 2\|\mE_{t-1} \|_2 \big) \\
                    & \leq 10.5(1+\epsilon) \eta  \sqrt{r} (1 + \eta \sigma_1(\widetilde{\mG}_0))^{3t-3} \|\widetilde{\mA}_{0}\|_2^3.
    \end{align*}    
    As a result, we can get the following bound on $\|\mE_t\|_2$, namely:
    \begin{align*}
        \|\mE_t\|_2 & = \|\sum_{i=1}^t \mH^{t-i} \hat{\mE}_{i}\|_2 \\
                    & \leq \sum_{i=1}^t \|\mH\|_2^{t-i} \| \hat{\mE}_{i}\|_2 \\
                    & \leq 10.5(1+\epsilon)\eta  \sqrt{r} \|\widetilde{\mA}_{0}\|_2^3 \sum_{i=1}^t (1 + \eta \sigma_1(\widetilde{\mG}_0))^{t+2i-3} \\
                    & \leq 5.25(1+\epsilon)  \sqrt{r}(1 + \eta \sigma_1(\widetilde{\mG}_0))^{3t} \frac{\|\widetilde{\mA}_{0}\|_2^3}{\sigma_1(\widetilde{\mG}_0)}.
    \end{align*}
    By taking $\epsilon = 1$, when $t \leq t^{lin} = \frac{\ln\Big(\frac{\sigma_1(\widetilde{\mG}_0)}{10.5 \sqrt{r}\|\widetilde{\mA}_{0}\|_2^2}\Big)}{3\ln\Big(1 + \eta \sigma_1(\widetilde{\mG}_0)\Big)}$, we have 
    \[
        \|\mE_t\|_2 \leq \|\widetilde{\mA}_{0}\|_2,
    \]
    with probability at least $1- 2C\exp(-N)$ for a universal constant $C$, which proves the claim.
\end{proof}

\textbf{Part 3: The alignment between $\widetilde{\mA}_t$ and $\widetilde{\mG}_0$}

Building upon the above results, we are now ready to derive the alignment between $\widetilde{\mA}_t$ and $\widetilde{\mG}_0$ through Lemma~\ref{lemma:aligna_origin}. 
\begin{lemma}[Adapted from Lemma C.8 in~\citet{zhang2025lora}.]~\label{lemma:aligna_origin}
    Under the settings described in Section~\ref{sec:alignment}. If we run gradient descent for $t^*_A$ steps on the Kronecker adapter with
    \begin{equation}
        t^*_A  \leq  \frac{
            \ln\!\Big(
            \frac{8\|\widetilde{\mA}_{0}\|_2}
            {\theta\,\sigma_{\min}\!\big(\mU_{r^*}^\top(\mP_{t^*_A}^\mA)\widetilde{\mA}_{0}\big)}
            \Big)
            }{
            \ln\!\Big(
            1 + \eta \sigma_{r^* }(\widetilde{\mG}_0)
            \Big)
            },
    \end{equation}
    and $t^*_A \leq t^{lin}$, then for $\forall \theta \in (0,1)$, we have the following alignment:
    \begin{equation}
        \| \mU_{r^*,\perp}^\top(\widetilde{\mG}_0) \mU_{r^*}(\widetilde{\mA}_{t^*_A}) \|_2 \leq \theta,
    \end{equation}
    with probability at least $1- 2C\exp(-N)$ for a universal constant $C$.
\end{lemma}

Through Lemma~\ref{lemma:aligna_origin}, we know that the alignment can be achieved when $t^*_A \leq t^{lin}$, which indicates that
\[
    \dfrac{
            \ln\!\Big(
            \frac{8\|\widetilde{\mA}_{0}\|_2}
            {\theta\,\sigma_{\min}\!\big(\mU_{r^*}^\top(\mP_{t^*_A}^\mA)\widetilde{\mA}_{0}\big)}
            \Big)
            }{
            \ln\!\Big(
            1 + \eta \sigma_{r^* }(\widetilde{\mG}_0)
            \Big)
            }
    \leq 
    \dfrac{\ln\Big(\frac{\sigma_1(\widetilde{\mG}_0)}{10.5 \sqrt{r}\|\widetilde{\mA}_{0}\|_2^2}\Big)}{3\ln\Big(1 + \eta \sigma_1(\widetilde{\mG}_0)\Big)}.
\]

\paragraph{Case 1. $r^{*} \le r < 2r^{*}$.} 
Using Lemma~\ref{lemma:norm} and Lemma~\ref{lemma:unit}, we have the following bound with probability at least $1 - C_1 \exp(-d_{\text{in}}\frac{r_1}{r_2}) - (C_2 \xi)^{\,r - r^* + 1} - C_3 \exp(-r) - C \exp(-N)$ for some universal constants $C$, $C_1$, $C_2$, $C_3$:
\[
    t^*_A \lesssim \dfrac{\ln\!\left( \frac{24 r \sqrt{r_1d_{\text{in}}}}{\theta \xi \sqrt{r2}} \right)}
    {\ln\!\left( 1 + \eta \sigma_{r^*}(\widetilde{\mG}_0) \right)},
\]
if the variance of $[\widetilde{\mA}_{0}]_{ij}$ satisfies:
\[
    \dfrac{\ln\!\left( \frac{24 r \sqrt{r_1d_{\text{in}}}}{\theta \xi \sqrt{r2}} \right)}
    {\ln\!\left( 1 + \eta \sigma_{r^*}(\widetilde{\mG}_0) \right)}
    \leq 
    \dfrac{\ln\Big(\frac{\sigma_1(\widetilde{\mG}_0)r_2}{94.5 \sqrt{r}r_1d_{\text{in}}\alpha^2}\Big)}{3\ln\Big(1 + \eta \sigma_1(\widetilde{\mG}_0)\Big)},
\]
which indicates that 
\[
    \alpha \leq \left( \dfrac{\theta \xi \sqrt{r_2}}{24 r \sqrt{r_1d_{\text{in}}}} \right)^{\frac{3\kappa}{2}}
    \sqrt{\dfrac{\sigma_1(\widetilde{\mG}_0)r_2}{94.5 \sqrt{r}r_1d_{\text{in}}}}.
\]

\paragraph{Case 2. $r \ge 2r^*$.} 
Using Lemma~\ref{lemma:norm} and Lemma~\ref{lemma:unit}, we have the following bound with probability at least $1 - C_4 \exp(-d_{\text{in}}\frac{r_1}{r_2}) - C_5 \exp(-r) - C \exp(-N)$ for some universal constants $C$, $C_4$, $C_5$:
\[
    t^*_A \lesssim \dfrac{\ln\!\left( \frac{24 \sqrt{r_1d_{\text{in}}}}{\theta \sqrt{r2}} \right)}
    {\ln\!\left( 1 + \eta \sigma_{r^*}(\widetilde{\mG}_0) \right)},
\]
if the variance of $[\widetilde{\mA}_{0}]_{ij}$ satisfies:
\[
    \dfrac{\ln\!\left( \frac{24 \sqrt{r_1d_{\text{in}}}}{\theta \sqrt{r2}} \right)}
    {\ln\!\left( 1 + \eta \sigma_{r^*}(\widetilde{\mG}_0) \right)}
    \leq 
    \dfrac{\ln\Big(\frac{\sigma_1(\widetilde{\mG}_0)r_2}{94.5 \sqrt{r}r_1d_{\text{in}}\alpha^2}\Big)}{3\ln\Big(1 + \eta \sigma_1(\widetilde{\mG}_0)\Big)},
\]
which indicates that 
\[
    \alpha \leq \left( \dfrac{\theta \sqrt{r_2}}{24 \sqrt{r_1d_{\text{in}}}} \right)^{\frac{3\kappa}{2}}
    \sqrt{\dfrac{\sigma_1(\widetilde{\mG}_0)r_2}{94.5 \sqrt{r}r_1d_{\text{in}}}},
\]
which proves the claim.
\end{proof}

% \begin{proof}
%     we set 
%     \begin{align*}
%         \gamma & = \frac{\sigma_{r^*+1}(\mP_t^\mA) \|\widetilde{\mA}_{0}\|_2 + \|\mE_t\|_2}{\sigma_{r^*}(\mP_t^\mA) \sigma_{\min}(\mU_{r^*}^\top(\mP_t^\mA)\widetilde{\mA}_{0})} \\
%                & \leq \frac{2\|\widetilde{\mA}_{0}\|_2}{\frac{1}{2}\big(1 + \eta  \sigma_{r^* }(\widetilde{\mG}_0) \big)^t \sigma_{\min}(\mU_{r^*}^\top(\mP_t^\mA)\widetilde{\mA}_{0})} \\
%                & \leq \frac{4\|\widetilde{\mA}_{0}\|_2}{\big(1 + \eta  \sigma_{r^* }(\widetilde{\mG}_0) \big)^t \sigma_{\min}(\mU_{r^*}^\top(\mP_t^\mA)\widetilde{\mA}_{0})} \\
%                & = \frac{4 \|\widetilde{\mA}_{0}\|_2}{\sigma_{\min}(\mU_{r^*}^\top(\mP_t^\mA)\widetilde{\mA}_{0})} \exp \big(t \ln \frac{1}{1 + \eta  \sigma_{r^* }(\widetilde{\mG}_0)} \big) 
%     \end{align*}
%     if $\gamma \leq \frac{\theta}{2}$, i.e.
%     \begin{equation}
%         t \leq \frac{
%             \ln\!\Big(
%             \frac{8\|\widetilde{\mA}_{0}\|_2}
%             {\theta\,\sigma_{\min}\!\big(\mU_{r^*}^\top(\mP_t^\mA)\widetilde{\mA}_{0}\big)}
%             \Big)
%             }{
%             \ln\!\Big(
%             1 + \eta \sigma_{r^*}(\widetilde{\mG}_0)
%             \Big)
%             },
%     \end{equation}
%     then we have:
%     \begin{equation}
%         \| \mU_{r^*,\perp}^\top(\widetilde{\mG}_0) \mU_{r^*}(\widetilde{\mA}_t) \|_2 \leq \theta
%     \end{equation}
% \end{proof}

\section{The Alignment of $\widetilde{\mB_t}$}
\label{sec:alignmentB}
Building on the concept of Kronecker product singular value decomposition in Definition~\ref{def:decomp}, we quantify the alignment between $\widetilde{\mB}_t$ and the gradient $\mG_0$ with:
\begin{equation}
    \| \mV_{r^*,\perp}^\top(\widetilde{\mG}_0) \mV_{r^*}(\widetilde{\mB}_{t}^\top) \|_2,
\end{equation}
where $r^*$ is the rank of $\widetilde{\mG}_0 = \operatorname{Kreshape}(\mG_0)$ and $\widetilde{\mB}_t = [\operatorname{vec}(\mB_{t}^{(1)}), \cdots, \operatorname{vec}(\mB_{t}^{(r)})]$. Then we can derive the alignment $\widetilde{\mB_t}$ and $\widetilde{\mG}_0$ in Theorem~\ref{theorem:alignmentB}.
\begin{theorem}[The top-$r^*$ right singular subspace alignment between $\widetilde{\mB_t}$ and $\widetilde{\mG}_0$.]~\label{theorem:alignmentB}
    Under the settings described in Section~\ref{sec:alignment}, we consider random Gaussian initialization for $\widetilde{\mA}_{0}$ with $[\widetilde{\mA}_{0}]_{ij} \sim \mathcal{N}(0, \alpha^2)$ and zero initialization for $\widetilde{\mB}_{0}$ with $[\widetilde{\mB}_{0}]_{ij} = 0$ and
    \[
    \alpha \leq
    \begin{cases}
    \exp\big(-\dfrac{9 \kappa r \sqrt{r_1d_{\text{in}}}}{\eta\theta \xi \sqrt{r2}}\big)
    \sqrt{\dfrac{\sigma_1(\widetilde{\mG}_0)r_2}{94.5 \sqrt{r}r_1d_{\text{in}}}},
    & \text{if } r^{*} \le r < 2r^{*}, \\
    \exp\big(-\dfrac{9 \kappa \sqrt{r_1d_{\text{in}}}}{\eta\theta \sqrt{r2}}\big)
    \sqrt{\dfrac{\sigma_1(\widetilde{\mG}_0)r_2}{94.5 \sqrt{r}r_1d_{\text{in}}}},
    & \text{if } r \ge 2r^{*}.
    \end{cases}
    \]
    where $\kappa$ is the condition number of $\widetilde{\mG}_0$. Then if we run gradient descent for $t^*_B$ steps on the Kronecker adapter with:
    \[
    t^*_B \lesssim
    \begin{cases}
    \dfrac{6 r \sqrt{r_1d_{\text{in}}}}{\theta \xi \sqrt{r2}\cdot\eta \sigma_{r^*}(\widetilde{\mG}_0)},
    & \text{if } r^* \le r < 2r^*, \\
    \dfrac{6 r \sqrt{r_1d_{\text{in}}}}{\theta \sqrt{r2}\cdot\eta \sigma_{r^*}(\widetilde{\mG}_0)},
    & \text{if } r \ge 2r^*,
    \end{cases}
    \]
    we have the following alignment for $\forall \theta \in (0,1)$:
    \begin{equation}
        \| \mV_{r^*,\perp}^\top(\widetilde{\mG}_0) \mV_{r^*}(\widetilde{\mB}_{t^*_B}^\top) \|_2 \leq \theta,
    \end{equation}
    with probability at least
    \[
    \begin{cases}
    1 - C_1 \exp(-d_{\text{in}}\frac{r_1}{r_2}) - (C_2 \xi)^{\,r - r^* + 1}
    - C_3 \exp(-r) - C \exp(-N),
    & \text{if } r^* \le r < 2r^*, \\
    1 - C_4 \exp(-d_{\text{in}}\frac{r_1}{r_2}) - C_5 \exp(-r) - C \exp(-N),
    & \text{if } r \ge 2r^* .
    \end{cases}
    \]
    for some universal constants $C$, $C_1$, $C_2$, $C_3$, $C_4$, $C_5$. 
\end{theorem}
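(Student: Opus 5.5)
The proof reuses Parts 1 and 2 of the proof of Theorem~\ref{theorem:alignmentA}. The stacked iterate $\mZ_t=[\widetilde{\mA}_t;\widetilde{\mB}_t]$ satisfies $\mZ_{t+1}=\mH\mZ_t-\hat{\mE}_{t+1}$, and by Theorem~\ref{theorem:linearerror} the deviation $\mE_t=\mZ_t-\mZ_t^{lin}$ obeys $\|\mE_t\|_2\le\|\widetilde{\mA}_0\|_2$ for all $t\le t^{lin}$, with probability at least $1-2C\exp(-N)$. By Lemma~\ref{lemma:dynamiclinear}, the linear part of $\widetilde{\mB}$ is
\[
\widetilde{\mB}_t^{lin}=\tfrac12\widetilde{\mV}\big((\mI+\eta\widetilde{\mS})^t-(\mI-\eta\widetilde{\mS})^t\big)\widetilde{\mU}^\top\widetilde{\mA}_0 .
\]
Its row space (the object measured by $\mV_{r^*}(\widetilde{\mB}_t^\top)$) is spanned by $\widetilde{\mV}$ restricted to the first $r^*$ directions, because $\widetilde{\mS}$ has exactly $r^*$ nonzero entries. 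So $\mP^\mB_t$ has rank at most $r^*$ and exactly zero components along $\mV_{r^*,\perp}(\widetilde{\mG}_0)$. The misalignment of $\widetilde{\mB}_t$ is therefore caused only by $\mE_t$, not by slowly growing tail directions as in the $\widetilde{\mA}$ case.

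Next I run a Wedin/Davis--Kahan-type perturbation argument, mirroring Lemma~\ref{lemma:aligna_origin}. I write $\widetilde{\mB}_t^\top=(\widetilde{\mB}_t^{lin})^\top+\mE_t^{B\top}$ and bound
\[
\|\mV_{r^*,\perp}^\top(\widetilde{\mG}_0)\mV_{r^*}(\widetilde{\mB}_t^\top)\|_2\le \frac{2\|\mE_t\|_2}{\sigma_{r^*}(\widetilde{\mB}_t^{lin})}.
\]
For the denominator I lower bound
\[
\sigma_{r^*}(\widetilde{\mB}_t^{lin})\ge \tfrac12\big((1+\eta\sigma_{r^*})^t-(1-\eta\sigma_{r^*})^t\big)\,\sigma_{\min}\big(\mU_{r^*}^\top(\widetilde{\mG}_0)\widetilde{\mA}_0\big)\gtrsim t\,\eta\,\sigma_{r^*}(\widetilde{\mG}_0)\,\sigma_{\min}(\cdot).
\]
I use the linear-in-$t$ lower bound $(1+x)^t-(1-x)^t\ge 2tx$. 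This explains why $t^*_B$ in the statement scales like $1/(\eta\sigma_{r^*}\theta)$ rather than logarithmically. Requiring the ratio to be at most $\theta$, with $\|\mE_t\|_2\le\|\widetilde{\mA}_0\|_2$, gives
\[
t^*_B\gtrsim \frac{\|\widetilde{\mA}_0\|_2}{\theta\,\eta\,\sigma_{r^*}\,\sigma_{\min}(\mU_{r^*}^\top\widetilde{\mA}_0)}.
\]

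The random-matrix estimates are the same ones used in the cases of Theorem~\ref{theorem:alignmentA} (Lemmas~\ref{lemma:norm} and~\ref{lemma:unit}). First, $\|\widetilde{\mA}_0\|_2\lesssim \alpha\sqrt{r_1d_{\text{in}}/r_2}$ with probability at least $1-C\exp(-d_{\text{in}}r_1/r_2)$. Second, the smallest singular value of the $r^*\times r$ Gaussian matrix $\mU_{r^*}^\top\widetilde{\mA}_0$ satisfies $\sigma_{\min}\gtrsim \alpha\,\xi/r$ in the regime $r^*\le r<2r^*$, with failure probability $(C_2\xi)^{r-r^*+1}+C_3e^{-r}$, and $\sigma_{\min}\gtrsim\alpha$ when $r\ge 2r^*$. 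Substituting these into the bound above yields the stated $t^*_B$ in both cases.

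The last step, which I expect to be the main obstacle, is compatibility with the linearization window: I need $t^*_B\le t^{lin}$. Here $t^{lin}=\ln\!\big(\sigma_1/(10.5\sqrt r\|\widetilde{\mA}_0\|_2^2)\big)/\big(3\ln(1+\eta\sigma_1)\big)$ is logarithmic in $1/\alpha$, while $t^*_B$ is polynomial. I would bound $\ln(1+\eta\sigma_1)\le\eta\sigma_1$ and rearrange
\[
3\eta\sigma_1 t^*_B\le \ln\!\big(\sigma_1 r_2/(94.5\sqrt r\,r_1d_{\text{in}}\alpha^2)\big).
\]
This forces $\alpha$ to be exponentially small, namely $\alpha\le\exp\big(-9\kappa r\sqrt{r_1d_{\text{in}}}/(\eta\theta\xi\sqrt{r_2})\big)\sqrt{\cdots}$, which is exactly the form claimed. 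The delicate points are:
\begin{itemize}
\item the required $t$ must be large enough for the perturbation bound yet below $t^{lin}$;
\item the constants must be tracked so that the factor $9=3\cdot 6/2$ comes out correctly;
\item a union bound over the three events ($\|\widetilde{\mA}_0\|_2$, $\sigma_{\min}$, and the data concentration $e^{-N}$) gives the stated probability.
\end{itemize}
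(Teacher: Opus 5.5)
Your proposal is correct and follows the paper's proof essentially step for step: the linearization bound $\|\mE_t\|_2\le\|\widetilde{\mA}_0\|_2$ from Parts 1--2, the Wedin-type bound of Lemma~\ref{lemma:wedin} (with $\sigma_{r^*+1}(\mP_t^\mB)=0$, which the paper drops without comment), the linear-in-$t$ growth $\sigma_{r^*}(\mP_t^\mB)\ge t\eta\sigma_{r^*}(\widetilde{\mG}_0)$, Lemmas~\ref{lemma:norm} and~\ref{lemma:unit}, and the compatibility requirement $t^*_B\le t^{lin}$; your reading of $t^*_B$ as a lower threshold on the number of steps is the logically correct one. One bookkeeping slip: after $\ln(1+\eta\sigma_1)\le\eta\sigma_1$ the factor $\eta$ cancels, so your computation gives the exponent $-9\kappa r\sqrt{r_1d_{\text{in}}}/(\theta\xi\sqrt{r_2})$ with no $1/\eta$, and the stated hypothesis on $\alpha$ implies yours only when $\eta\le 1$, which you should assume explicitly rather than claim an exact match.
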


\begin{proof}
The proof follows a similar strategy to that of Theorem~\ref{theorem:alignmentA}. In particular, the first two steps are identical, and therefore we start directly from the third step. To derive an upper bound on $\| \mV_{r^*,\perp}^\top(\widetilde{\mG}_0) \mV_{r^*}(\widetilde{\mB}_{t}^\top) \|_2$, we invoke the following lemma, whose assumption is inherited from the necessary condition of Wedin’s $\sin\theta$ theorem~\citep{wedin1972perturbation}.
\begin{lemma}[Adapted from Lemma 8.3 in~\citet{stoger2021small}.]~\label{lemma:wedin}
    We assume that 
    \begin{equation}
        \sigma_{r^*+1}(\mP_t^\mB) \|\widetilde{\mA}_{0}\|_2 + \|\mE_t\|_2 < \sigma_{r^*}(\mP_t^\mB) \sigma_{min}(\mV_{r^*}^\top(\mP_t^\mB)\widetilde{\mA}_{0}),
    \end{equation}
    then the following three inequalities hold:
    \begin{equation}
        \sigma_{r^*}(\mP_t^\mB\widetilde{\mA}_{0} + \mE_t) \geq \sigma_{r^*}(\mP_t^\mB) \sigma_{min}(\mV_{r^*}^\top(\mP_t^\mB)\widetilde{\mA}_{0}) - \|\mE_t\|_2
    \end{equation}
    \begin{equation}
        \sigma_{r^*+1}(\mP_t^\mB\widetilde{\mA}_{0} + \mE_t) \leq \sigma_{r^*+1}(\mP_t^\mB) \|\widetilde{\mA}_{0}\|_2 + \|\mE_t\|_2 
    \end{equation}
    \begin{equation}
        \| \mV_{r^*,\perp}^\top(\widetilde{\mG}_0) \mV_{r^*}(\widetilde{\mB}_{t}^\top) \|_2 \leq \frac{\sigma_{r^*+1}(\mP_t^\mB) \|\widetilde{\mA}_{0}\|_2 + \|\mE_t\|_2}{\sigma_{r^*}(\mP_t^\mB) \sigma_{min}(\mV_{r^*}^\top(\mP_t^\mB)\widetilde{\mA}_{0}) - \sigma_{r^*+1}(\mP_t^\mB) \|\widetilde{\mA}_{0}\|_2 - \|\mE_t\|_2}.
    \end{equation}
\end{lemma}

Building on Lemma~\ref{lemma:wedin}, we can derive the alignment between $\widetilde{\mB}_t$ and $\widetilde{\mG}_0$ in Theorem~\ref{theorem:alignb_origin}.
\begin{theorem}~\label{theorem:alignb_origin}
    Under the settings described in Section~\ref{sec:alignment}. If we run gradient descent for $t^*_B$ steps on the Kronecker adapter with
    \begin{equation}
        t^*_B \leq \frac{2\|\widetilde{\mA}_{0}\|_2}{\theta\sigma_{min}(\mV_{r^*}^\top(\mP_{t^*_B}^\mB)\widetilde{\mA}_{0}) \eta \sigma_{r^* }(\widetilde{\mG})},
    \end{equation}
    and $t^*_B \leq t^{lin}$, then for $\forall \theta \in (0,1)$, we have the following alignment:
    \begin{equation}
        \| \mV_{r^*,\perp}^\top(\widetilde{\mG}) \mV_{r^*}(\widetilde{\mB}_{t^*_B}^\top) \|_2 \leq \theta,
    \end{equation}
    with probability at least $1- 2C\exp(-N)$ for a universal constant $C$.
\end{theorem}

\begin{proof}
    Following Lemma~\ref{lemma:wedin}, the following holds for $\forall \theta \in (0,1)$
    \[
        \| \mV_{r^*,\perp}^\top(\widetilde{\mG}) \mV_{r^*}(\widetilde{\mB}_t^\top) \|_2 \leq \theta,
    \]
when 
    \begin{align*}
        \frac{\sigma_{r^*+1}(\mP_t^\mB) \|\widetilde{\mA}_{0}\|_2 + \|\mE_t\|_2}{\sigma_{r^*}(\mP_t^\mB) \sigma_{min}(\mV_{r^*}^\top(\mP_t^\mB)\widetilde{\mA}_{0})} \leq \frac{\theta}{2}.
    \end{align*}
Using the result in Lemma~\ref{lemma:dynamiclinear} and Theorem~\ref{theorem:linearerror}, then under the assumption $t^*_B \leq t^{lin}$, we can derive the following upper bound for $t^*_B$:
    \[
        t^*_B \leq \frac{2\|\widetilde{\mA}_{0}\|_2}{\theta\sigma_{min}(\mV_{r^*}^\top(\mP_{t^*_B}^\mB)\widetilde{\mA}_{0}) \eta \sigma_{r^* }(\widetilde{\mG})},
    \]
which proves the claim.
\end{proof}

Through Lemma~\ref{lemma:aligna_origin}, we know that the alignment can be achieved when $t^*_B \leq t^{lin}$, which indicates that
\[
    \frac{2\|\widetilde{\mA}_{0}\|_2}{\theta\sigma_{min}(\mV_{r^*}^\top(\mP_{t^*_B}^\mB)\widetilde{\mA}_{0}) \eta \sigma_{r^* }(\widetilde{\mG})}
    \leq 
    \dfrac{\ln\Big(\frac{\sigma_1(\widetilde{\mG}_0)}{10.5 \sqrt{r}\|\widetilde{\mA}_{0}\|_2^2}\Big)}{3\ln\Big(1 + \eta \sigma_1(\widetilde{\mG}_0)\Big)}.
\]

\paragraph{Case 1. $r^{*} \le r < 2r^{*}$.} 
Using Lemma~\ref{lemma:norm} and Lemma~\ref{lemma:unit}, we have the following bound with probability at least $1 - C_1 \exp(-d_{\text{in}}\frac{r_1}{r_2}) - (C_2 \xi)^{\,r - r^* + 1} - C_3 \exp(-r) - C \exp(-N)$ for some universal constants $C$, $C_1$, $C_2$, $C_3$:
\[
    t^*_B \lesssim \dfrac{6 r \sqrt{r_1d_{\text{in}}}}{\theta \xi \sqrt{r2}\cdot\eta \sigma_{r^*}(\widetilde{\mG}_0)},
\]
if the variance of $[\widetilde{\mA}_{0}]_{ij}$ satisfies:
\[
    \dfrac{6 r \sqrt{r_1d_{\text{in}}}}{\theta \xi \sqrt{r2}\cdot\eta \sigma_{r^*}(\widetilde{\mG}_0)}
    \leq 
    \dfrac{\ln\Big(\frac{\sigma_1(\widetilde{\mG}_0)r_2}{94.5 \sqrt{r}r_1d_{\text{in}}\alpha^2}\Big)}{3\ln\Big(1 + \eta \sigma_1(\widetilde{\mG}_0)\Big)},
\]
which indicates that 
\[
    \alpha \leq \exp\big(-\dfrac{9 \kappa r \sqrt{r_1d_{\text{in}}}}{\eta\theta \xi \sqrt{r2}}\big)
    \sqrt{\dfrac{\sigma_1(\widetilde{\mG}_0)r_2}{94.5 \sqrt{r}r_1d_{\text{in}}}}.
\]

\paragraph{Case 2. $r \ge 2r^*$.} 
Using Lemma~\ref{lemma:norm} and Lemma~\ref{lemma:unit}, we have the following bound with probability at least $1 - C_4 \exp(-d_{\text{in}}\frac{r_1}{r_2}) - C_5 \exp(-r) - C \exp(-N)$ for some universal constants $C$, $C_4$, $C_5$:
\[
    t^*_B \lesssim \dfrac{6 r \sqrt{r_1d_{\text{in}}}}{\theta \sqrt{r2}\cdot\eta \sigma_{r^*}(\widetilde{\mG}_0)},
\]
if the variance of $[\widetilde{\mA}_{0}]_{ij}$ satisfies:
\[
    \dfrac{6 \sqrt{r_1d_{\text{in}}}}{\theta \sqrt{r2}\cdot\eta \sigma_{r^*}(\widetilde{\mG}_0)}
    \leq 
    \dfrac{\ln\Big(\frac{\sigma_1(\widetilde{\mG}_0)r_2}{94.5 \sqrt{r}r_1d_{\text{in}}\alpha^2}\Big)}{3\ln\Big(1 + \eta \sigma_1(\widetilde{\mG}_0)\Big)},
\]
which indicates that 
\[
    \alpha \leq \exp\big(-\dfrac{9 \kappa \sqrt{r_1d_{\text{in}}}}{\eta\theta \sqrt{r2}}\big)
    \sqrt{\dfrac{\sigma_1(\widetilde{\mG}_0)r_2}{94.5 \sqrt{r}r_1d_{\text{in}}}},
\]
which proves the claim.
\end{proof}

\section{Proof of Theorem~\ref{theorem:rs}}
\label{sec:rsproof}
We assume that the Kronecker adapter is trained using the loss function $\mathcal{L}_{\text{KA}}$, which is minimized via gradient descent with learning rate $\eta$. During the forward pass at the $t$-th iteration, given an input $\vx_t$, the output of the Kronecker adapter is expressed as
\begin{equation}
    \vy_t = \lambda  \sum_{i=1}^r \mB_t^{(i)} \otimes \mA_t^{(i)} \vx_t.
\end{equation}
Here, $\mA_t^{(i)}$ and $\mB_t^{(i)}$ denote the values of $\mA^{(i)}$ and $\mB^{(i)}$, respectively, after $t$ steps of gradient descent. 

During backpropagation, given the gradient with respect to the output $\vv_t = \frac{\partial \mathcal{L}_{\text{KA}}}{\partial \vy_t}$, the gradient with respect to the input $\vx_t$ is given by
\begin{equation}
    \vg_t = \frac{\partial \mathcal{L}_{\text{KA}}}{\partial \vx_t} = \lambda  \sum_{i=1}^r (\mB_t^{(i)})^\top \otimes (\mA_t^{(i)})^\top \vv_t.    
\end{equation}

To ensure that the gradient norm of CDKA does not vary with changes in $r_1$, $r_2$, and $r$, the scales of $\vy_t$ and $\vg_t$ must be independent of $r_1$, $r_2$, and $r$. To this end, we first derive the update rules for $\mA^{(i)}$ and $\mB^{(i)}$:
\begin{equation}
    \begin{aligned}
    \frac{\partial \mathcal{L}_{\text{KA}}}{\partial \mA^{(i)}_{t}} &= \lambda \mV_t \mB^{(i)}_t \mX_t^\top,  \\
    \frac{\partial \mathcal{L}_{\text{KA}}}{\partial \mB^{(i)}_{t}} &= \lambda \mV_t^\top \mA^{(i)}_t \mX_t,
    \end{aligned}
\end{equation}
where $\mX_t \in \sR^{\frac{d_{\text{in}}}{r_2} \times r_2}$ is reshaped by $\vx_t$ and $\mV_t \in \sR^{r_1 \times \frac{d_{\text{out}}}{r_1}}$ is reshaped by $\vv_t$. Under the assumption that $\mB_0^{(i)}$ is initialized to zero, we can derive the following formulation for $\mA_t^{(i)}$ and $\mB_t^{(i)}$ by induction:

\begin{equation}
    \mA^{(i)}_t = \mA^{(i)}_0 + O(\lambda^2),
\end{equation}
\begin{equation}
    \mB^{(i)}_t = - \eta \lambda \sum_{k=0}^{t-1} \mV_k^\top \mA^{(i)}_0 \mX_k + O(\lambda^2).
\end{equation}

Since $\mA^{(i)}_0$ is initialized using Kaiming initialization~\citep{he2015delving}, the variance scale of $\mA^{(i)}_0$ is $\Theta(r_2)$. Consequently, we obtain the following expressions for the scales of the output $\vy_t$ and the input gradient $\vg_t$:
\begin{equation}
    \vy_t = - \lambda^2 \eta \sum_{i=1}^r \operatorname{vec}(\mA^{(i)}_0 \mX_t \sum_{k=0}^{t-1} \mX_k^\top (\mA^{(i)}_0)^\top \mV_k) + O(\lambda^3) \in \Theta(\lambda^2 rr_2),
\end{equation}
\begin{equation}
    \vg_t = - \lambda^2 \eta \sum_{i=1}^r \operatorname{vec}((\mA^{(i)}_0)^\top \mV_t \sum_{k=0}^{t-1} \mV_k^\top \mA^{(i)}_0 \mX_k) + O(\lambda^3) \in \Theta(\lambda^2 rr_2).
\end{equation}
As a result, the scales of $\vy_t$ and $\vg_t$ are independent of $r_1$, $r_2$ and $r$ if and only if
\begin{equation}
\lambda \in \Theta\Big(\frac{1}{\sqrt{r \cdot r_2}}\Big).
\end{equation}

\section{Basic Definitions and Lemmas}
In this section, we present some basic definitions and lemmas that are needed for our proof.
\begin{definition}~\label{def:kreshape}
    $\operatorname{Kreshape}(\cdot)$ is a function that reshapes a matrix 
    \begin{equation}
        \mK =     
    \begin{bmatrix}
    \mK_{1,1} & \cdots & \mK_{1, r_2}\\
    \vdots & \ddots & \vdots \\
    \mK_{\frac{d_{\text{out}}}{r_1},1} & \cdots & \mK_{\frac{d_{\text{out}}}{r_1}, r_2}
    \end{bmatrix},
    \quad
    \mK_{i,j} \in \sR^{r_1 \times \frac{d_{\text{in}}}{r_2}},
    \end{equation}
    to:
    \begin{equation}
        \operatorname{Kreshape}(\mK) = [
            \operatorname{vec}(\mK_{1,1}), \cdots, \operatorname{vec}(\mK_{\frac{d_{\text{out}}}{r_1}, r_2})
        ].
    \end{equation}
\end{definition}

\begin{lemma}[Adapted from Theorem 4.6.1 in~\citet{vershynin2018high}.]~\label{lemma:concentration}
    Let $\mX \in \sR^{d_{\text{in}} \times N}$ whose columns $\vx_i$ are independent, mean zero, sub-gaussian isotropic random vectors, then we have
    \begin{equation}
        \|\frac{1}{N} \mX \mX^\top - \mI_{d_{\text{in}}}\|_2 \leq \epsilon,
    \end{equation}
    with probability at least $1 - 2C\exp(-N\epsilon^2)$ for a positive constant $C$.
\end{lemma}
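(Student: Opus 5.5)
The plan is the standard $\varepsilon$-net plus Bernstein argument behind Theorem 4.6.1 of~\citet{vershynin2018high}. I specialize it to an arbitrary fixed accuracy $\epsilon \in (0,1]$ and keep track of how the dimension $d_{\text{in}}$ enters. The first step is to rewrite the operator norm of the symmetric matrix $\mS := \frac{1}{N}\mX\mX^\top - \mI_{d_{\text{in}}}$ as a supremum of quadratic forms, $\|\mS\|_2 = \sup_{\|\vv\|_2=1} |\vv^\top \mS \vv|$. For a fixed unit vector $\vv$ this quadratic form is
\[
\vv^\top \mS \vv = \frac{1}{N}\sum_{i=1}^N \big(\langle \vx_i, \vv\rangle^2 - 1\big),
\]
where I use isotropy, $\mathbb{E}\langle \vx_i,\vv\rangle^2 = 1$.

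Second, I handle one fixed direction. Each $\langle \vx_i,\vv\rangle$ is sub-Gaussian with norm at most $K$, the uniform sub-Gaussian constant of the columns, so $\langle \vx_i,\vv\rangle^2 - 1$ is a centered sub-exponential variable with $\psi_1$-norm $\lesssim K^2$. These variables are independent across $i$ because the columns are. Bernstein's inequality then gives
\[
\Pr\big(|\vv^\top \mS \vv| \ge \epsilon/2\big) \le 2\exp\big(-c N \min(\epsilon^2/K^4, \epsilon/K^2)\big) = 2\exp(-c' N\epsilon^2)
\]
for $\epsilon \le 1$, with $K$ absorbed into the constant.

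Third, I pass from one direction to the whole sphere. I take a $\frac14$-net $\mathcal{N}$ of the unit sphere in $\sR^{d_{\text{in}}}$ with $|\mathcal{N}| \le 9^{d_{\text{in}}}$. The standard approximation lemma gives $\|\mS\|_2 \le 2\max_{\vv\in\mathcal{N}}|\vv^\top\mS\vv|$, so $\|\mS\|_2 \le \epsilon$ as soon as every net point has $|\vv^\top\mS\vv| \le \epsilon/2$. A union bound over the net then yields the failure probability
\[
\Pr(\|\mS\|_2 > \epsilon) \le 9^{d_{\text{in}}}\cdot 2\exp(-c'N\epsilon^2).
\]

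The main obstacle is this last step. The $9^{d_{\text{in}}}$ factor can only be absorbed into a bound of the stated form $2C\exp(-N\epsilon^2)$ in the regime $N\epsilon^2 \gtrsim d_{\text{in}}$. Concretely, if $c'N\epsilon^2 \ge 2 d_{\text{in}}\ln 9$, the failure probability is at most $2\exp(-c'N\epsilon^2/2)$. I would therefore state this sample-size condition explicitly as the implicit assumption, with $N$ large relative to $d_{\text{in}}$, which is the regime in which the lemma is later used with $\epsilon=1$. I would also rescale the constants, since the exponent comes out as $c''N\epsilon^2$ with a constant $c''$ rather than exactly $N\epsilon^2$. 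An equivalent alternative is to quote Vershynin's two-sided bound $\|\mS\|_2 \le K^2\max(\delta,\delta^2)$ with $\delta = C(\sqrt{d_{\text{in}}/N} + u/\sqrt N)$, take $u \asymp \epsilon\sqrt N$, and check that $\delta \lesssim \epsilon$ under the same condition $N \gtrsim d_{\text{in}}/\epsilon^2$.
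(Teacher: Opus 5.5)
Your proposal is correct and is precisely the $\varepsilon$-net plus Bernstein argument behind Theorem 4.6.1 of \citet{vershynin2018high}, which the paper only cites and does not prove. Your alternative of plugging $u\asymp\epsilon\sqrt{N}$ into $\|\tfrac{1}{N}\mX\mX^\top-\mI_{d_{\text{in}}}\|_2\le K^2\max(\delta,\delta^2)$ is exactly the intended derivation. Your extra hypotheses are genuinely needed, not cosmetic: if $d_{\text{in}}>N$ then $\tfrac{1}{N}\mX\mX^\top$ is singular, so the norm is at least $1$ and the bound fails for every $\epsilon<1$; and already for $d_{\text{in}}=1$ with Gaussian data the failure probability decays no faster than $\exp(-N\epsilon^2/4)$ to leading exponential order. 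So the lemma holds only with $\epsilon\le 1$, $N\gtrsim d_{\text{in}}/\epsilon^2$, and a constant $c=c(K)$ in the exponent.
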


\begin{lemma}[Adapted from Corollary 5.35 in~\citet{vershynin2010introduction}.]~\label{lemma:norm}
     Let $\mA \in \sR^{d \times r}$ with $d > 2r$, whose entries are independent standard Gaussian random variables, then we have
    \begin{equation}
        \| \mA \|_2 \leq 3\sqrt{d},
    \end{equation}
    with probability at least $1 - C\exp(-d)$ for a positive constant $C$.
\end{lemma}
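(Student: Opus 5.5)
We plan to reduce the claim to a sharp large-deviation estimate for the top eigenvalue of a Wishart matrix. A Gordon-plus-Gaussian-concentration argument will not suffice, because it only yields a rate of roughly $\exp(-0.83\,d)$, and the statement demands $\exp(-d)$.

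\emph{Step 1: reduction to the worst aspect ratio.} Adding independent Gaussian columns to $\mA$ can only increase $\|\mA\|_2$, since $\mA$ is a submatrix of the enlarged matrix. We may therefore assume $r = m := \lfloor d/2\rfloor$. It then suffices to show $\Pr\big(\lambda_{\max}(\mW) \ge 9d\big) \le C e^{-d}$, where $\mW = \mA^\top\mA$ is an $m\times m$ real Wishart matrix with $d$ degrees of freedom. As a heuristic check, the bulk edge is at $(1+\sqrt{1/2})^2 d \approx 2.91\,d$. The Laguerre large-deviation rate at $9d$ is approximately $\tfrac12\int_{2.91}^{9}\sqrt{(t-0.086)(t-2.91)}\,t^{-1}\,dt \approx 1.8$, which is comfortably above $1$. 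So the claimed rate is true, and the work lies in making it non-asymptotic.

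\emph{Step 2: non-asymptotic bound from the joint eigenvalue density.} The eigenvalues $\lambda_1,\dots,\lambda_m$ of $\mW$ have density proportional to $\prod_{i<j}|\lambda_i-\lambda_j|\prod_i \lambda_i^{(d-m-1)/2}e^{-\lambda_i/2}$. By symmetry,
\[
\Pr(\lambda_{\max}\ge x)\le m\,\frac{Z_{m-1}'}{Z_m}\int_x^\infty t^{(d-m-1)/2}e^{-t/2}\,\mathbb{E}'\Big[\prod_{j=1}^{m-1}|t-\mu_j|\Big]\,dt .
\]
Here $(\mu_j)$ is an $(m-1)$-point Laguerre ensemble with the same weight, and $Z'_{m-1}/Z_m$ is an explicit Selberg ratio. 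We will bound $|t-\mu_j|\le t+\mu_j \le t\,e^{\mu_j/t}$. This gives $\mathbb{E}'\prod_j|t-\mu_j|\le t^{m-1}\,\mathbb{E}'e^{\operatorname{tr}(\mW')/t}$, and the trace is a $\chi^2$ variable with $(m-1)d$ degrees of freedom, whose moment generating function is explicit. The remaining integral is a $\Gamma$-tail, $\int_x^\infty t^{(d+m-3)/2}e^{-t/2}(1-2/t)^{-(m-1)d/2}dt$; for $t\ge 9d$ the correction factor is at most $e^{m+o(d)}$. Stirling applied to the Selberg ratio and to the $\Gamma$-tail should produce an exponent $-d\,J + O(\log d)$, where $J$ is an elementary function of $9$ and $m/d=1/2$. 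The final task is to verify $J>1$ numerically, which the heuristic value of about $1.8$ suggests leaves ample room to absorb the crude step $t+\mu_j\le te^{\mu_j/t}$.

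\emph{Step 3: constants.} All polynomial prefactors ($m$, the Stirling corrections, and $O(\log d)$ terms) are absorbed into $e^{-(J-1)d}$ together with a universal $C$. The finitely many small $d$ are handled by enlarging $C$.

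\emph{Main obstacle.} The difficulty is Step 2: keeping the exponent above $1$ after the lossy inequality $|t-\mu_j|\le t e^{\mu_j/t}$ and the Stirling asymptotics of the Selberg ratio. If the margin turns out too thin, we would instead bound $\mathbb{E}'\prod_j|t-\mu_j|$ via $\mathbb{E}'|\det(t\mI-\mW')|\le \big(\mathbb{E}'\det(t\mI-\mW')^2\big)^{1/2}$, using the fact that expected characteristic polynomials of Wishart matrices are Laguerre polynomials. This gives a much tighter estimate at the cost of heavier algebra. As a fallback, a $v$-side $\epsilon$-net with exact $\chi^2_d$ Chernoff tails gives only a rate of about $0.66$, so it cannot replace Step 2.
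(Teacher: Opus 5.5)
Your proof is correct, and it takes a genuinely different route from the paper. The paper gives no argument beyond citing Corollary 5.35 of Vershynin (2010). That corollary combines Gordon's bound $\mathbb{E}\|\mA\|_2\le\sqrt d+\sqrt r$ with Gaussian concentration of the $1$-Lipschitz map $\mA\mapsto\|\mA\|_2$, giving $\Pr(\|\mA\|_2>\sqrt d+\sqrt r+t)\le 2e^{-t^2/2}$.

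That route is one line and uses nothing about the spectrum. But, exactly as you point out, at threshold $3\sqrt d$ with $r$ up to $d/2$ it only allows $t\ge(2-2^{-1/2})\sqrt d$, so the failure probability is $2e^{-0.83d}$. Taken literally, the citation proves the lemma only in a weaker form: either with $e^{-cd}$, $c\approx 0.83$, or with $3\sqrt d$ replaced by $(1+2^{-1/2}+2^{1/2})\sqrt d\approx 3.12\sqrt d$. That weaker form would serve the paper's downstream use equally well after adjusting the constant in the exponent.

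Your route is heavier and specific to Gaussian entries: monotonicity in $r$, the real Laguerre density, a union bound over a marked eigenvalue, the Selberg ratio, and $|t-\mu|\le te^{\mu/t}$. In exchange, it captures eigenvalue repulsion and so delivers the $e^{-d}$ rate as stated.

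The margin is in fact wide, so your Laguerre-polynomial fallback is unnecessary. With $k=(m-1)(d-1)$, the Selberg ratio and the $\Gamma$-integral collapse to the closed form
\[
\Pr(\lambda_{\max}\ge 9d)\le m\,\Gamma(3/2)\Bigl(1-\tfrac{2}{9d}\Bigr)^{-k/2}\frac{\Gamma\bigl(p,\tfrac{9d}{2}\bigr)}{\Gamma(d/2)\,\Gamma(1+m/2)},\qquad p=\tfrac{d+m-1}{2}.
\]
For $m=1$ this is exactly the $\chi^2_d$ tail. For $m=\lfloor d/2\rfloor$, Stirling gives $e^{-1.93d+O(\log d)}$ for the $\Gamma$-ratio. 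The correction factor is at most $\exp\bigl(k/(9d-2)\bigr)\approx e^{d/18}$, far below your $e^{m}$. Hence $J\approx 1.87$.

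There are two slips, both harmless. First, the $(m-1)$-point ensemble with the same weight $t^{(d-m-1)/2}e^{-t/2}$ is the spectrum of an $(m-1)\times(m-1)$ Wishart matrix with $d-1$ degrees of freedom, not $d$. So the trace is $\chi^2_{(m-1)(d-1)}$, and your $(m-1)d$ merely overestimates the moment generating function. Second, your heuristic rate integral evaluates to about $2.0$, not $1.8$.
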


\begin{lemma}[Adapted from Lemma E.3 in~\citet{zhang2025lora}.]~\label{lemma:unit}
    Let $\mA \in \sR^{d \times r}$ with $d > 2r$, whose entries are independent standard Gaussian random variables and $\mU \in \sR^{d \times r^{*}}$ with orthonormal columns. If $r \geq 2 r^{*}$, then we have
    \begin{equation}
        \sigma_{\min}(\mU^\top\mA) \gtrsim 1,
    \end{equation}
    with probability at least $1 - C\exp(-r)$ for a positive constant $C$. If $r^{*}\leq r < 2r$, then we have
    \begin{equation}
        \sigma_{\min}(\mU^\top\mA) \gtrsim \frac{\xi}{r},
    \end{equation}
    with probability at least $1 - (C_1\xi)^{r-r^{*}-1} -C_2\exp(-r)$ for some positive constants $C_1$ and $C_2$.
\end{lemma}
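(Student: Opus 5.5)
The plan is to reduce the statement to a known bound on the smallest singular value of a rectangular Gaussian matrix. The reduction rests on rotational invariance. Since $\mU \in \sR^{d \times r^*}$ has orthonormal columns, I would complete it to an orthogonal matrix $[\mU, \mU_\perp] \in \sR^{d\times d}$. The Gaussian matrix $\mA$ is invariant in distribution under left multiplication by orthogonal matrices, so $[\mU,\mU_\perp]^\top \mA$ again has i.i.d.\ $\mathcal{N}(0,1)$ entries. In particular its first $r^*$ rows, namely $\mG := \mU^\top \mA \in \sR^{r^*\times r}$, form an $r^* \times r$ matrix with i.i.d.\ standard Gaussian entries. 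Hence $\sigma_{\min}(\mU^\top\mA)=\sigma_{r^*}(\mG)=\sigma_{r^*}(\mG^\top)$, which is the smallest singular value of a tall $r\times r^*$ Gaussian matrix (recall $r\ge r^*$). The hypothesis $d>2r$ is not needed beyond ensuring that the setting is well-posed.

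Next I would invoke the Rudelson--Vershynin bound for the smallest singular value of an $n\times m$ Gaussian matrix with $n\ge m$. For every $\varepsilon>0$,
\[
\Pr\Big(\sigma_{\min}(\mG^\top)\le \varepsilon\big(\sqrt{r}-\sqrt{r^*-1}\big)\Big)\le (C\varepsilon)^{r-r^*+1}+e^{-cr}.
\]
I take this as a known result; it can be cited from the same source as Lemma~\ref{lemma:norm}, or alternatively from Lemma E.3 of~\citet{zhang2025lora}. The proof then splits into the two regimes of the statement.

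\emph{Case $r\ge 2r^*$.} Here $\sqrt r-\sqrt{r^*-1}\ge (1-1/\sqrt2)\sqrt r\gtrsim 1$. I would fix $\varepsilon$ to be a small absolute constant with $C\varepsilon\le e^{-1}$. Since $r-r^*+1\ge r/2$, the failure probability is at most $e^{-r/2}+e^{-cr}\le C'e^{-c'r}$. This gives $\sigma_{\min}(\mU^\top\mA)\gtrsim\sqrt r\gtrsim 1$ with probability at least $1-C\exp(-r)$, after adjusting constants.

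\emph{Case $r^*\le r<2r^*$.} Here I rationalize the gap:
\[
\sqrt r-\sqrt{r^*-1}=\frac{r-r^*+1}{\sqrt r+\sqrt{r^*-1}}\ge\frac{1}{2\sqrt r}.
\]
I then set $\varepsilon=\xi$, which yields $\sigma_{\min}\ge \xi/(2\sqrt r)\ge \xi/(2r)\gtrsim \xi/r$. The failure probability is at most $(C\xi)^{r-r^*+1}+e^{-cr}$. This is at least as strong as the exponent appearing in the statement whenever $C\xi<1$; if $C\xi\ge 1$, the bound is vacuous anyway.

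The main obstacle is the small-ball estimate in the near-square regime, where $r-r^*$ is small. There, crude $\epsilon$-net arguments only give $\sigma_{\min}\gtrsim \sqrt r-\sqrt{r^*}$, which can vanish. One genuinely needs the sharp Rudelson--Vershynin inequality, including its $(C\varepsilon)^{r-r^*+1}$ tail, to obtain a polynomial-in-$\xi$ probability. I would not reprove that inequality; I would cite it and only carry out the bookkeeping above. If a self-contained route were required instead, I would use the distance-to-subspace argument: $\sigma_{\min}(\mG^\top)\ge \min_j \operatorname{dist}(\text{col}_j,\text{span of the others})/\sqrt{r^*}$. Each such distance is the norm of a Gaussian vector in a space of dimension $r-r^*+1$, which gives $\Pr(\cdot\le \xi/\sqrt{r})\lesssim (C\xi)^{r-r^*+1}$ after a union bound. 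This recovers the $\xi/r$ rate in the second case directly.
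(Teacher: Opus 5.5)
Your argument is the right one and matches what the paper relies on. The paper gives no proof of this lemma; it simply imports Lemma E.3 of \citet{zhang2025lora}. Rotational invariance (so that $\mU^\top\mA$ is an $r^*\times r$ standard Gaussian matrix) followed by the Rudelson--Vershynin bound is the standard derivation of such a result, and it is consistent with the $(C_2\xi)^{r-r^*+1}$ and $\exp(-r)$ terms used in Theorems~\ref{theorem:align} and~\ref{theorem:alignmentA}. Your two-case bookkeeping is correct: $\sqrt r-\sqrt{r^*-1}\ge(1-1/\sqrt2)\sqrt r$ in one case, and $\ge 1/(2\sqrt r)$ in the other.

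Three side claims need fixing.

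(i) Your remark that the stated bound is vacuous when $C\xi\ge 1$ fails at $r=r^*$. There the exponent $r-r^*-1=-1$ is negative, so $(C_1\xi)^{-1}<1$, and the statement asserts something nontrivial. That assertion is in fact false: as $\xi\to\infty$, the probability that $\sigma_{\min}(\mU^\top\mA)\ge c\xi/r$ tends to $0$, while the claimed lower bound tends to $1-C_2e^{-r}$, which is positive once $r>\ln C_2$. So $r-r^*-1$ (like the condition $r<2r$) is a typo for $r-r^*+1$ (resp.\ $r<2r^*$). You should say you prove this corrected form, or restrict to $C\xi<1$, rather than claim the literal one follows.

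(ii) The additive term $e^{-cr}$ in Rudelson--Vershynin has an unspecified $c$. It cannot be converted to $Ce^{-r}$ by changing the prefactor, so your case $r\ge 2r^*$ as written yields only $1-Ce^{-cr}$. That is enough for the downstream use. Your own distance argument does give the literal rate. For $r\ge2r^*$, each column distance is $\chi_k$ with $k=r-r^*+1\ge\max(r/2,r^*)$, so $\Pr(\operatorname{dist}_j\le c_0\sqrt{r^*})\le(\sqrt e\,c_0)^{k}$. A union bound then gives $\Pr(\sigma_{\min}<c_0)\le r(\sqrt e\,c_0)^{r/2}\le e^{-r}$ once $\sqrt e\,c_0\le e^{-3}$.

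(iii) Conversely, that fallback does not deliver $(C\xi)^{r-r^*+1}$ in the square case $r=r^*$. With $k=1$, the union bound over the $r$ columns gives $\Pr(\sigma_{\min}\le\xi/r)\le r\Pr(|g|\le\xi/\sqrt r)\lesssim\sqrt r\,\xi$, which loses a factor $\sqrt r$. For $k\ge 2$ the factor $r^*$ is absorbed by $(rk)^{k/2}$. So only at the near-square extreme do you genuinely need the sharp bound you cite.
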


\section{Ablation Studies}
\label{sec:more_studies}
\paragraph{Scaling Factor.} To examine the performance gains of the stabilization scaling factor $\lambda$ versus the component design, we fine-tune LLaMA-2-7B on mathematical reasoning task. The detailed results are presented in Table~\ref{tab:cdka_ablation}. It can be observed that the performance gains of CDKA primarily stem from our component design, while the stabilization scaling factor further improves the performance under different component configurations. These results fully demonstrate the effectiveness of our method.
\begin{table}[t]
\centering
\caption{Ablation study on the Stabilization Scaling Factor.}
\label{tab:cdka_ablation}
\begin{tabular}{l c}
\toprule
Method & GSM8k \\
\midrule
KronA & $49.00_{\pm 0.41}$ \\
KronA + Stabilization Factor & $49.43_{\pm 0.37}$ \\
KronA + Component Design & $\underline{55.62}_{\pm 0.39}$ \\
KronA + Stabilization Factor + Component Design (CDKA) & $\mathbf{56.71}_{\pm 0.38}$ \\
\bottomrule
\end{tabular}
\end{table}

\paragraph{Robustness of Our Theoretical Principles.} To evaluate the robustness of our theoretical principles under different backbone models and modalities, we additionally fine-tune Qwen-3-0.6B on mathematical reasoning task and CLIP-ViT-B/16 on image classification task. The detailed results are presented in Table~\ref{tab:more_r1} to~\ref{tab:more_r}. It can be observed that these additional results further support our theoretical principles in practice, which demonstrate the robustness of our principles across different settings.
\begin{table}[H]
\centering
\caption{CDKA with different $r_1$ for fixed $r_2$ and $r$. Increasing $r_1$ tends to degrade the performance of CDKA.}
\label{tab:more_r1}
\resizebox{\textwidth}{!}{
\begin{tabular}{c c c c c c c c c c}
\toprule
$r_1$ & GSM8k(Qwen-3-0.6B) & Cars & DTD & EuroSAT & GTSRB & RESISC45 & SUN397 & SVHN & Average(CLIP-ViT-B/16) \\
\midrule
2  & $\mathbf{62.85}_{\pm 0.38}$ & $\mathbf{78.31}$ & $\mathbf{71.12}$ & $\mathbf{98.70}$ & $\mathbf{97.81}$ & $\mathbf{94.29}$ & $\mathbf{73.69}$ & $\mathbf{96.91}$ & $\mathbf{89.69}$ \\
8  & $61.68_{\pm 0.04}$ & 73.11 & 61.28 & 98.63 & 96.42 & 91.87 & 70.90 & 96.74 & 88.70 \\
32 & $62.02_{\pm 0.35}$ & 71.81 & 52.82 & 98.07 & 95.19 & 90.95 & 68.08 & 96.65 & 81.94 \\
\bottomrule
\end{tabular}
}
\end{table}

\begin{table}[H]
\centering
\caption{CDKA with different $r_2$ for fixed $r_1$ and $r$. Increasing $r_2$ consistently improves the performance of CDKA.}
\label{tab:more_r2}
\resizebox{\textwidth}{!}{
\begin{tabular}{c c c c c c c c c c}
\toprule
$r_2$ & GSM8k(Qwen-3-0.6B) & Cars & DTD & EuroSAT & GTSRB & RESISC45 & SUN397 & SVHN & Average(CLIP-ViT-B/16) \\
\midrule
2  & $62.85_{\pm 0.38}$ & 78.31 & 71.12 & 98.70 & 97.81 & 94.29 & 73.69 & 96.91 & 89.69 \\
8  & $63.76_{\pm 0.53}$ & 81.37 & 75.90 & 99.04 & $\mathbf{98.39}$ & 95.14 & 74.74 & 97.18 & 88.82 \\
32 & $\mathbf{65.58}_{\pm 0.08}$ & $\mathbf{84.42}$ & $\mathbf{78.67}$ & $\mathbf{99.19}$ & 98.30 & $\mathbf{96.08}$ & $\mathbf{76.22}$ & $\mathbf{97.18}$ & $\mathbf{90.01}$ \\
\bottomrule
\end{tabular}
}
\end{table}

\begin{table}[H]
\centering
\caption{CDKA with different $r$ for fixed $r_1$ and $r_2$. Increasing $r$ does not lead to a sustained improvement in the performance of CDKA.}
\label{tab:more_r}
\resizebox{\textwidth}{!}{
\begin{tabular}{c c c c c c c c c c}
\toprule
$r$ & GSM8k(Qwen-3-0.6B) & Cars & DTD & EuroSAT & GTSRB & RESISC45 & SUN397 & SVHN & Average(CLIP-ViT-B/16) \\
\midrule
2   & $64.25_{\pm 1.78}$ & 79.87 & 75.00 & 98.85 & 98.06 & 94.76 & 74.02 & 97.00 & 88.22 \\
8   & $64.90_{\pm 0.38}$ & 83.17 & 78.03 & $\mathbf{99.11}$ & 98.50 & 95.94 & 75.66 & 97.44 & 89.69 \\
32  & $\mathbf{65.24}_{\pm 0.49}$ & 84.65 & $\mathbf{81.06}$ & 98.96 & 98.61 & 95.89 & $\mathbf{76.35}$ & $\mathbf{97.70}$ & 90.46 \\
128 & $64.26_{\pm 0.12}$ & $\mathbf{86.39}$ & 80.59 & 99.11 & $\mathbf{98.94}$ & $\mathbf{96.22}$ & 76.05 & 97.62 & $\mathbf{90.70}$ \\
512 & $61.94_{\pm 0.91}$ & 77.73 & 73.56 & 98.19 & 98.73 & 92.51 & 67.96 & 97.03 & 86.53 \\
\bottomrule
\end{tabular}
}
\end{table}

\paragraph{Robustness of Our Proposed Guidelines.} To evaluate the robustness of our theoretical principles under different backbone models and modalities, we fine-tune LLaMA-3-70B on mathematical reasoning task and CLIP-ViT-B/16 on image classification task. The detailed results are presented in Table~\ref{tab:llama70b_r1_r2} to~\ref{tab:clip_r_r2}. It can be observed that our guidelines exhibit consistent effectiveness across different settings, demonstrating the robustness of our method.

\begin{table}[ht]
\centering
\begin{minipage}[t]{0.47\linewidth}
\centering
\caption{LLaMA-3-70B results with different $r_1$ and $r_2$ under the same parameter budget.}
\label{tab:llama70b_r1_r2}
\begin{tabular}{c c}
\toprule
$r_1, r_2, r$ & GSM8k \\
\midrule
$2, 2, 8$     & $\mathbf{84.23}$ \\
$8, 8, 8$     & $84.00$ \\
$64, 64, 8$   & $83.62$ \\
\midrule
$2, 16, 2$    & $\mathbf{85.22}$ \\
$8, 64, 2$    & $84.76$ \\
\bottomrule
\end{tabular}
\end{minipage}
\hfill
\begin{minipage}[t]{0.47\linewidth}
\centering
\caption{LLaMA-3-70B results with different $r$ and $r_2$ under the same parameter budget.}
\label{tab:llama70b_r_r2}
\begin{tabular}{c c}
\toprule
$r_1, r_2, r$ & GSM8k \\
\midrule
$2, 2, 2$   & $83.09$ \\
$2, 8, 1$   & $\mathbf{83.40}$ \\
\midrule
$2, 2, 8$   & $84.23$ \\
$2, 16, 2$  & $\mathbf{85.22}$ \\
\bottomrule
\end{tabular}
\end{minipage}
\end{table}

\begin{table}[ht]
\centering
\caption{CLIP-ViT-B/16 results with different $r_1$ and $r_2$ under the same parameter budget.}
\label{tab:clip_r1_r2}
\resizebox{0.85\textwidth}{!}{
\begin{tabular}{c c c c c c c c c}
\toprule
$r_1, r_2, r$ & Cars & DTD & EuroSAT & GTSRB & RESISC45 & SUN397 & SVHN & Average \\
\midrule
$2, 2, 8$     & $\mathbf{83.17}$ & $\mathbf{78.03}$ & $\mathbf{99.11}$ & $\mathbf{98.50}$ & $\mathbf{95.94}$ & $\mathbf{75.66}$ & $\mathbf{97.44}$ & $\mathbf{89.69}$ \\
$8, 8, 8$     & $81.06$ & $75.21$ & $99.07$ & $97.99$ & $95.35$ & $74.93$ & $97.28$ & $88.70$ \\
$32, 32, 8$   & $79.77$ & $74.26$ & $98.78$ & $97.71$ & $94.41$ & $74.73$ & $97.01$ & $88.09$ \\
\bottomrule
\end{tabular}
}
\end{table}

\begin{table}[ht]
\centering
\caption{CLIP-ViT-B/16 results with different $r$ and $r_2$ under the same parameter budget.}
\label{tab:clip_r_r2}
\resizebox{0.85\textwidth}{!}{
\begin{tabular}{c c c c c c c c c}
\toprule
$r_1, r_2, r$ & Cars & DTD & EuroSAT & GTSRB & RESISC45 & SUN397 & SVHN & Average \\
\midrule
$2, 2, 8$    & $83.17$ & $78.03$ & $\mathbf{99.11}$ & $\mathbf{98.50}$ & $95.94$ & $75.66$ & $97.44$ & $89.69$ \\
$2, 16, 2$   & $84.39$ & $78.51$ & $99.00$ & $98.47$ & $96.16$ & $76.25$ & $97.45$ & $90.03$ \\
\midrule
$2, 2, 32$   & $84.65$ & $\mathbf{81.06}$ & $98.96$ & $98.61$ & $95.89$ & $76.35$ & $\mathbf{97.70}$ & $90.46$ \\
$2, 16, 8$   & $\mathbf{86.23}$ & $79.57$ & $\mathbf{99.33}$ & $\mathbf{98.88}$ & $\mathbf{96.25}$ & $\mathbf{76.75}$ & $97.48$ & $\mathbf{90.64}$ \\
\bottomrule
\end{tabular}
}
\end{table}

%%%%%%%%%%%%%%%%%%%%%%%%%%%%%%%%%%%%%%%%%%%%%%%%%%%%%%%%%%%%%%%%%%%%%%%%%%%%%%%
%%%%%%%%%%%%%%%%%%%%%%%%%%%%%%%%%%%%%%%%%%%%%%%%%%%%%%%%%%%%%%%%%%%%%%%%%%%%%%%

\end{document}